\documentclass{article}

\usepackage{longcat_style}
\usepackage[utf8]{inputenc}
\usepackage[T1]{fontenc}
\usepackage{microtype}
\usepackage{amsthm}
\newtheorem{proposition}{Proposition}
\usepackage{booktabs}
\usepackage{placeins}
\usepackage{flafter}
\usepackage{xcolor}
\usepackage{natbib}
\usepackage[hidelinks]{hyperref}
\hypersetup{
  pdftitle={Long-Horizon Scaling: How Model Capabilities Shape the Returns to Computation},
  pdfauthor={Haoyu Zheng, Zhengyu Chen, Huaisheng Zhu, Ruishan Fang, Teng Xiao, Yiwei Li, Jingang Wang, Wenqiao Zhang}
}
\usepackage{url}

\usepackage{amsmath,amsfonts,bm}

\def\eqref#1{equation~\ref{#1}}

\def\1{\bm{1}}

\DeclareMathAlphabet{\mathsfit}{\encodingdefault}{\sfdefault}{m}{sl}
\SetMathAlphabet{\mathsfit}{bold}{\encodingdefault}{\sfdefault}{bx}{n}

\graphicspath{{figure/}}

\title{Long-Horizon Scaling: How Model Capabilities Shape the Returns to Computation}
\author{
Haoyu Zheng\textsuperscript{1,2,$\dagger$}, Zhengyu Chen\textsuperscript{1,*},
Huaisheng Zhu\textsuperscript{1}, Ruishan Fang\textsuperscript{1,3,$\dagger$} \\
\textbf{Teng Xiao\textsuperscript{4,5}, Yiwei Li\textsuperscript{1},
Jingang Wang\textsuperscript{1}, Wenqiao Zhang\textsuperscript{2,*}} \\[4pt]
\textsuperscript{1}Meituan LongCat Team \quad \textsuperscript{2}Zhejiang University \\
\textsuperscript{3}Westlake University \quad \textsuperscript{4}Allen Institute for AI \\
\textsuperscript{5}University of Washington
}
\renewcommand{\shorttitle}{Long-Horizon Scaling: How Model Capabilities Shape the Returns to Computation}
\renewcommand{\headeright}{}

\begin{document}
\maketitle
\begingroup
\renewcommand{\thefootnote}{\fnsymbol{footnote}}
\footnotetext[2]{Work done during an internship at Meituan.}
\footnotetext[1]{Corresponding authors.}
\endgroup

\begin{abstract}
Long-horizon agents improve solutions through sustained interaction, execution, and task feedback. Scaling studies relate performance to resources and capabilities, yet how existing capabilities shape returns to extended interaction remains less understood. To address this gap, we analyze AutoLab and EdgeBench, two long-horizon benchmarks. We find that starting performance and subsequent growth are associated with different capabilities: within a task category, similar early scores can precede different later gains. To formalize this finding, we model capability--time scaling with category-specific logistic power laws shared across models. Fitted to early trajectories, these curves extrapolate the observed models' category-average scores to later computation. However, rising average scores mask narrowing improvement opportunities: later gains concentrate among fewer improving models. High final scores and continued improvement also have distinct capability profiles. Predicted mean gains estimate each model's fraction of improving tasks; averaging these estimates forecasts the average share of improving models. These uneven returns motivate deciding whether a specific run should continue. We therefore derive a continuation policy to save time and compute with limited score loss. The policy conditions growth predictions on the run's observed progress and weighs immediate and delayed gains against computation costs. In replay with training and price calibration based on other models' histories, the policy saves roughly one-third of full-run time, with relative score losses of $2.4\%$ on AutoLab individual runs and $3.3\%$ on EdgeBench published mean curves. Our repository is available at \url{https://github.com/Chihaya-Anon-chan/long-horizon-scaling}.
\end{abstract}

\section{Introduction}
\label{sec:introduction}

\begin{figure}[!htbp]
\centering
\includegraphics[width=\linewidth]{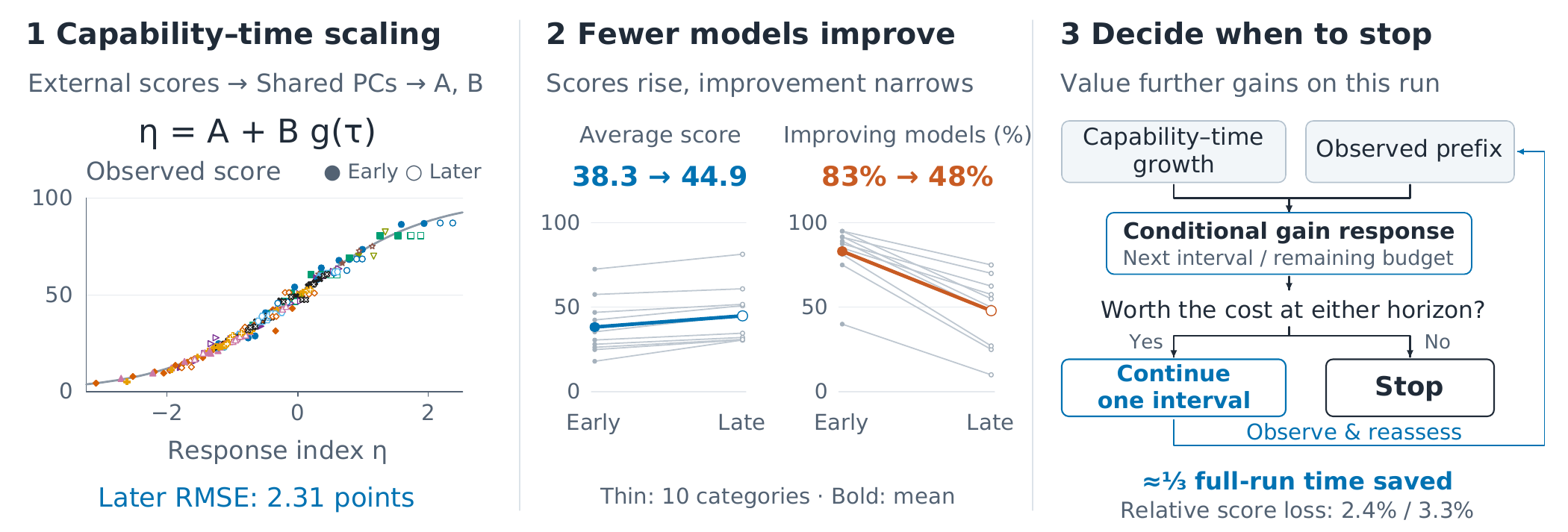}
\caption{From capability--time scaling to adaptive continuation. (1) All $254$ observations versus the fitted index. The gray curve shows the logistic link; filled/open markers denote early/later observations. RMSE is averaged across categories. (2) Thin lines show ten categories; bold lines show their means. Scores use first/last interval endpoints (AutoLab $40/100\%$ budget; EdgeBench $4/12$ h). (3) Growth fitted on other models and the current prefix guide stopping. Reported tradeoffs use prices calibrated on other models and then fixed (AutoLab/EdgeBench; Section~\ref{sec:continuation_results}).}
\label{fig:scaling_overview}
\end{figure}

Language models increasingly undertake research and engineering through sustained interaction, execution, and refinement~\citep{lu2026aiscientist,wijk2025rebench}. Understanding these capabilities requires measuring growth with computation, beyond final scores. Scaling laws frame this question: training scaling relates performance to model size, data, and compute~\citep{kaplan2020scaling,hoffmann2022training}; observational scaling uses shared capability representations~\citep{ruan2024observational,polo2025sloth}; test-time scaling examines inference computation~\citep{snell2025scaling,schaeffer2025monkeys}. Recent work models growth during sustained interaction~\citep{zhu2026edgebench}, but its relation to model capabilities remains less understood. We therefore ask: \emph{How do shared model capabilities shape long-horizon growth and the distribution of gains from further computation?}

Our central finding is that a shared capability space organizes both starting performance and subsequent growth through task-specific combinations (Figure~\ref{fig:scaling_overview}, left). Task trajectories measure progress on particular tasks. We use external benchmarks as a common reference to compare model capabilities across tasks. Their correlated scores support a compact representation with multiple directions of model variation~\citep{ruan2024observational,zeng2026benchpress}. For each task category, two learned sets of weights combine a model's coordinates to describe starting performance and growth over time, respectively. Both sets are shared across models. These combinations parameterize a logistic power law, allowing similar starting scores to precede different later gains. The fitted weights express these growth patterns as external benchmark combinations. This shared representation lets us compare how capabilities relate to performance across task categories and budgets. Fits to early trajectories predict the observed models' later category-average scores.

The growth model above also helps explain why average scores rise while fewer models continue to improve (Figure~\ref{fig:scaling_overview}, middle). On average across tasks in each category, declining participation (the share of improving models) concentrates gains, partly offset by more balanced gains among those that improve. A relation calibrated on early intervals uses predicted mean gains to estimate the fraction of tasks on which each model later improves. Averaging these forecasts across models estimates task-averaged participation. External benchmark associations help interpret these patterns. In systems and software engineering, repository and execution scores correlate more strongly with final scores than with continued improvement, whereas scientific reasoning scores correlate more strongly with continued improvement than with final scores. Gain timing shows that models with fewer late improvements may have already realized most of their observed gains.

This finding---that fewer models improve in later intervals---motivates deciding whether a particular model should continue its current task run. Runs sharing a category-level reference can differ in attained gains and recent stagnation. Conditioning capability--time growth learned from other models on the current run's observed history yields a bounded curve for next-interval and remaining-budget gains (Figure~\ref{fig:scaling_overview}, right). The policy purchases one interval when either forecast justifies its computation cost~\citep{hay2012computations}, then reassesses. The remaining-budget forecast allows the policy to wait for delayed gains even when the next interval alone does not justify its cost.

We examine these growth patterns and evaluate the resulting continuation policy on AutoLab and EdgeBench, whose research and engineering tasks require sustained execution and task feedback~\citep{xu2026autolab,zhu2026edgebench}. Our AutoLab individual-run trajectories and official EdgeBench task--model mean curves support category-level analysis of ten categories, ten model versions, and $71$ tasks. The category analysis tests temporal extrapolation for observed models and forecasts later improvement frequencies. Continuation training and price calibration exclude target models' long-horizon histories; decisions use external profiles and observed prefixes.

This joint analysis yields three connected contributions:
\begin{enumerate}
    \item \textbf{Capability-dependent growth and gain concentration.} Task-specific capability combinations describe starting performance and growth, and help explain shrinking participation. Benchmark profiles distinguish final scores from continued improvement (Sections~\ref{sec:capability_time}--\ref{sec:gain_distribution}).
    \item \textbf{A continuation policy derived from capability--time growth.} Conditioning capability-dependent progress on the current run yields a bounded curve for immediate and delayed gains, guiding further computation through sequential reassessment (Section~\ref{sec:continuation}).
    \item \textbf{Empirical validation across two long-horizon benchmarks.} Early fits predict later scores with mean category $R^2=0.844$ and RMSE $2.31$ points. Mean participation falls $35.2$ percentage points from first to last interval. In replay, the policy saves roughly one-third of full-run time at $2.4\%$ relative score loss on AutoLab and $3.3\%$ on EdgeBench.
\end{enumerate}

\section{Measurement Framework and Experimental Setting}
\label{sec:measurement}
\label{sec:setup}

We study how verified solutions improve over sustained computation. AutoLab covers CUDA kernel optimization, model development, puzzles and challenges, and system optimization~\citep{xu2026autolab}. EdgeBench includes formal mathematics, games, knowledge work, optimization, scientific and machine-learning tasks, and systems and software engineering~\citep{zhu2026edgebench}. Growth describes score trajectories over computation; gains are interval increments, and improvements are positive increments. Checkpoints reveal their timing and distribution.

Let $m$ index a model version and $i$ a concrete task in category $c(i)$. Runs have a predeclared budget $T$ and normalized time $\tau=t/T$. At predeclared checkpoints ending at $\tau_N=1$, trajectories record their best verified score in $[0,1]$; natural termination is absorbing. Write $s_{mi}(\tau)$ for a task--model mean curve. For fixed tasks $\mathcal I_c$ shared by models $\mathcal M_c$, the category response targets
\begin{equation}
    \bar s_{mc}(\tau)=\frac{1}{|\mathcal I_c|}\sum_{i\in\mathcal I_c}s_{mi}(\tau).
    \label{eq:category_mean}
\end{equation}
Our category panels contain $71$ tasks, four AutoLab and six EdgeBench categories, and four or five models per category ($10$ distinct versions overall). Membership depends only on trajectory availability. AutoLab curves average available replicates from the evaluations of \citet{li2026beyondfinalscores}; EdgeBench curves are published task--model means. Equations use $[0,1]$ scores; results report scores and errors on $0$--$100$. At checkpoint $k$, continuation uses score $Y_k$ and observed history $\mathcal F_k$.

Fixed membership keeps temporal comparisons on the same tasks and models within each category. Continuation replay uses all eligible histories under its held-out-model protocol (Section~\ref{sec:continuation_results}); it does not require every task to be shared by the category panel.

\section{Capability--Time Scaling of Long-Horizon Performance}
\label{sec:capability_time}

\begin{figure}[!t]
\centering
\includegraphics[width=\linewidth]{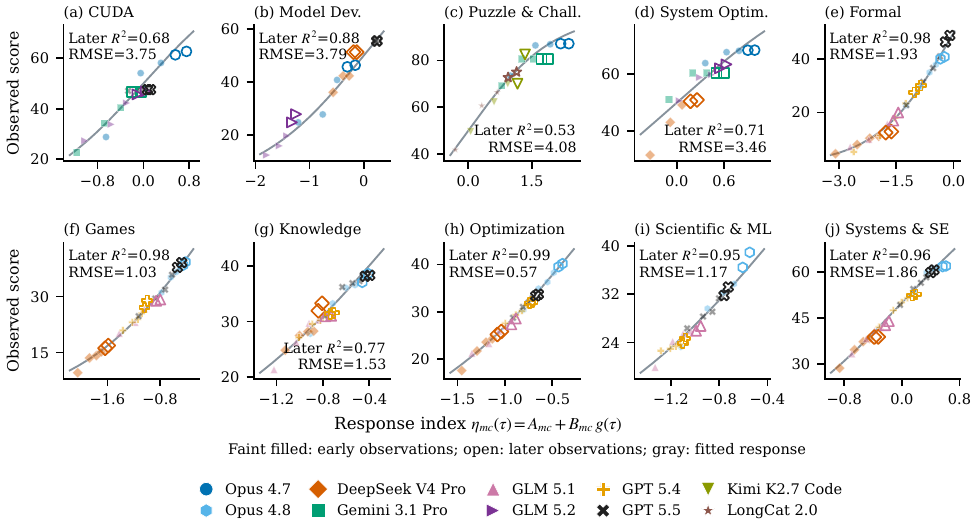}
\caption{Shared capabilities organize category--time responses across all ten categories: AutoLab (a--d), EdgeBench (e--j). Filled/open markers show early/later category means, with model identity fixed by color and shape; all $254$ means are retained. Gray curves show $100\sigma(\eta)$, and annotations report later $R^2$ and RMSE in score points. Section~\ref{sec:temporal_fit} specifies the early/later checkpoints and five-PC fitting protocol; Table~\ref{tab:task_content} lists the category panels. Axis ranges vary by category.}
\label{fig:category_responses}
\end{figure}

\subsection{Shared Capabilities, Category-Specific Growth}
\label{sec:capability_space}
Shared capability coordinates organize the level and timing of progress through category-specific readouts. Let $\mathbf x_m\in\mathbb R^{29}$ be model $m$'s preprocessed profile of $29$ external benchmark scores (Appendix~\ref{sec:method_details}). We estimate a regularized covariance from $82$ reference entries excluding evaluated versions and aliases, and retain five principal components (PCs)~\citep{jolliffe2016pca}:
\begin{equation}
    \mathbf z_m=\mathbf V_5^{\mathsf T}\mathbf x_m\in\mathbb R^5.
    \label{eq:capability_projection}
\end{equation}
The columns of $\mathbf V_5\in\mathbb R^{29\times5}$ are the retained covariance eigenvectors. Each PC combines the external measurements, and the same model coordinates serve all categories. PC1 is a broad positive direction across all 29 benchmarks; PC2 contrasts agent-execution with knowledge and reasoning measurements. Figure~\ref{fig:capability_basis} shows all loadings and model coordinates; external-only missing-data estimation and preprocessing appear in Appendix~\ref{sec:method_details}.

For first checkpoint $0<\tau_0<1$, define normalized logarithmic time
\begin{equation}
    g(\tau)=\frac{\log(\tau/\tau_0)}{\log(1/\tau_0)},
    \qquad g(\tau_0)=0,\quad g(1)=1.
    \label{eq:log_time}
\end{equation}
Logarithmic time measures proportional compute increases from the first checkpoint to the endpoint. Category-specific capability combinations determine early performance and temporal response:
\begin{align}
    A_{mc}&=a_c+\mathbf w_c^{\mathsf T}\mathbf z_m,
    &B_{mc}&=b_c+\mathbf v_c^{\mathsf T}\mathbf z_m,
    \label{eq:capability_readouts}\\
    \eta_{mc}(\tau)&=A_{mc}+B_{mc}g(\tau),
    &\widehat q_{mc}(\tau)&=\sigma\!\left(\eta_{mc}(\tau)\right),
    \label{eq:scaling_response}
\end{align}
Here $\widehat q_{mc}$ predicts the category mean $\bar s_{mc}$, and $\sigma(u)=(1+e^{-u})^{-1}$ maps the response index $\eta$ to $[0,1]$. $A_{mc}$ is the predicted first-checkpoint score logit; $B_{mc}$ is the modeled logit change to the endpoint. Each category shares its intercepts $a_c,b_c$ and weights $\mathbf w_c,\mathbf v_c$ across models; substituting a model's coordinates yields its $A_{mc},B_{mc}$. Separate readouts allow similar starting performance to coexist with different subsequent growth.

The response is a \emph{logistic power law}, since
\begin{equation}
    \frac{\widehat q_{mc}(\tau)}{1-\widehat q_{mc}(\tau)}
    =e^{A_{mc}}\left(\frac{\tau}{\tau_0}\right)^{\kappa_{mc}},
    \qquad \kappa_{mc}=\frac{B_{mc}}{\log(1/\tau_0)}.
    \label{eq:odds_power_law}
\end{equation}
The initial score odds are $e^{A_{mc}}$; doubling computation multiplies them by $2^{\kappa_{mc}}$. The score grows at $\kappa_{mc}\widehat q_{mc}(1-\widehat q_{mc})$ per unit log time, so gains diminish near saturation even with a fixed exponent. A matched comparison with all 29 external inputs favors the logistic over the affine log-time response in eight of ten categories (Appendix~\ref{sec:response_details}).

The five-PC projection yields benchmark combinations that depend on category and time:
\begin{equation}
    \eta_{mc}(\tau)=a_c+b_cg(\tau)
       +\left\{\mathbf V_5\bigl[\mathbf w_c+g(\tau)\mathbf v_c\bigr]\right\}^{\mathsf T}
        \mathbf x_m.
    \label{eq:benchmark_readout}
\end{equation}
The effective weights $\mathbf V_5[\mathbf w_c+g(\tau)\mathbf v_c]$ combine all $29$ measurements through the shared five-PC space. Each category's $\mathbf w_c$ sets the initial weights; $\mathbf v_c$ determines how they change with time.

\begin{figure}[!htbp]
\centering
\includegraphics[width=\linewidth]{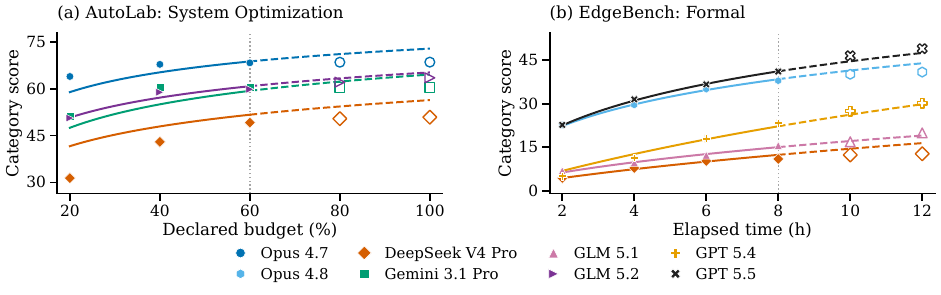}
\caption{Observed trajectories show early gain realization and sustained growth. Fits from Figure~\ref{fig:category_responses}; all 50 model--checkpoint means: (a) 12 tasks, four models; (b) eight tasks, five models. Filled/solid: early observations/fits; open/dashed: later observations/predictions. Dotted: fitting cutoffs.}
\label{fig:category_time_trajectories}
\end{figure}

\subsection{Forecasting Later Category Performance}
\label{sec:temporal_fit}
Within each category, we fit coefficients shared across models by regularized squared error on early scores: AutoLab $20\%,40\%,60\%$ budget and EdgeBench $2,4,6,8$ hours. Later evaluation uses $80\%,100\%$ and $10,12$ hours, respectively. At a fixed dimension, coefficients and regularization use early checkpoints only; Appendix~\ref{sec:method_details} details dimension selection and comparisons.

Across all $90$ later model--checkpoint means, mean category $R^2$ is $0.844$, weighting categories equally (Figure~\ref{fig:category_responses}). These forecasts extend the observed models' early trajectories to later computation. Category RMSE ranges from $0.57$ to $4.08$ points, with five categories exceeding $R^2=0.95$.

On the same later points, retaining the last early score gives mean RMSE $3.10$; fitting an independent logistic time curve to each model gives $2.52$; replacing PCs with regularized model indicators gives $2.66$. Five PCs attain $2.31$, improving on independent curves in seven categories (Appendix~\ref{sec:temporal_controls}).

Figure~\ref{fig:category_time_trajectories} places these responses on budget and time axes to expose when gains occur. System Optimization shows substantial early gains followed by flatter observed trajectories; in Formal, Opus~4.8 and GPT~5.5 start at similar scores but diverge as computation proceeds.

\FloatBarrier

\section{Shrinking Improvement Opportunities and Concentrated Gains}
\label{sec:gain_distribution}

For each interval, \emph{participation} measures the fraction of models improving on a task; \emph{improvement frequency} measures the fraction of tasks improved by a model.

\subsection{Later Gains Reach Fewer Models}
\label{sec:gain_results}
The growth curves describe how much each model improves on average across tasks. To understand who benefits from further computation, we examine how widely models share gains on each concrete task. Let $\Delta_{mi,k}=s_{mi}(\tau_{k+1})-s_{mi}(\tau_k)\ge0$ and $M_c=|\mathcal M_c|$. Define participation $P$ and effective gain coverage $C$~\citep{hill1973diversity} by
\begin{equation}
    P_{i,k}=\frac{1}{M_c}\sum_m\mathbf1\{\Delta_{mi,k}>0\},
    \qquad C_{i,k}=\frac{(\sum_m\Delta_{mi,k})^2}{M_c\sum_m\Delta_{mi,k}^2}.
    \label{eq:participation_coverage}
\end{equation}
Sums range over $\mathcal M_c$. Coverage is one for equal positive gains across all models and zero when none improves. For positive gains, their population coefficient of variation gives the exact identity
\begin{equation}
    C_{i,k}=P_{i,k}U_{i,k},\qquad
    U_{i,k}=\frac{1}{1+\mathrm{CV}_{+,i,k}^2},\qquad C_{i,k}\le P_{i,k}.
    \label{eq:coverage_identity}
\end{equation}
Participation measures how widely improvement occurs; $U$ measures how evenly positive gains are distributed among improving models. Their product separates these two sources of concentration and is unchanged by a common rescaling of gain magnitudes.

Category-mean scores rise at every recorded interval in all ten categories. Yet across the $71$ fixed tasks, task-averaged participation falls in every category from the first to the last interval, by $35.2$ percentage points on a ten-category average (Figure~\ref{fig:growth_opportunity_profiles}(a)). Positive gains become more balanced in seven categories. For tasks with positive gains in both intervals, fewer improving models drive the coverage decline; more evenly shared gains partly offset it. Requiring gains above $0.1$ points preserves the decline in all ten categories. Separately, expanding the positive-gain analysis to all available complete task--model curves yields declines in all ten categories, averaging $33.5$ percentage points (Appendix~\ref{sec:coverage_details}).

\subsection{From Mean Gains to Improvement Frequency}
For a given model, task category, and interval, mean gain equals the fraction of tasks that improve multiplied by their mean gain, provided at least one task improves. A smaller mean can reflect fewer improving tasks, smaller improvements, or both. The growth curves predict mean gains for each model and task category; we use those gains to estimate the fraction of tasks that improve.

Let $\widehat\mu_{mc,k}=\widehat q_{mc}(\tau_{k+1})-\widehat q_{mc}(\tau_k)$ be the predicted category-mean increment. Using the positive increments of the fitted curves, we model the fraction of tasks with positive gain as
\begin{equation}
    \widehat\pi_{mc,k}
    =\frac{1}{1+(\mu_c^\star/\widehat\mu_{mc,k})^\gamma},
    \qquad \mu_c^\star>0.
    \label{eq:opportunity_response}
\end{equation}
The category scales $\mu_c^\star$ and shared exponent $\gamma=1.75$ are fitted to early improvement frequencies. At $\widehat\mu=\mu_c^\star$, half the tasks are predicted to improve.

Figure~\ref{fig:growth_opportunity_profiles}(b) groups later model--category--interval observations with similar predictions. The grouped predictions track the observed fractions of improving tasks.

In the fitted low-gain regime, improvement frequency falls proportionally faster than mean gain. When $\widehat\mu\ll\mu_c^\star$, $\widehat\pi\approx(\widehat\mu/\mu_c^\star)^{1.75}$: halving the predicted mean increment leaves about $30\%$ of the predicted improvement frequency. Mean scores can thus rise while improvement reaches a shrinking share of tasks. Starting level $A_{mc}$ and temporal response $B_{mc}$ jointly set the interval gain: near saturation, further logit growth yields smaller score increments. The scale $\mu_c^\star$ determines how these increments translate into improvement frequency for that task category. These capability-dependent growth histories place models in the low-opportunity regime at different times.

Within a category, averaging improvement frequency over models equals averaging participation over tasks. The mean predicted frequency therefore estimates task-averaged participation. Fewer improving models reduce the upper bound on gain coverage, $C\le P$; the balance of gains among them determines $U$ in $C=PU$. Appendix~\ref{sec:opportunity_calibration} evaluates later calibration and compares forecast inputs under the same early-to-late split.

\begin{figure}[!htbp]
\centering
\includegraphics[width=\linewidth]{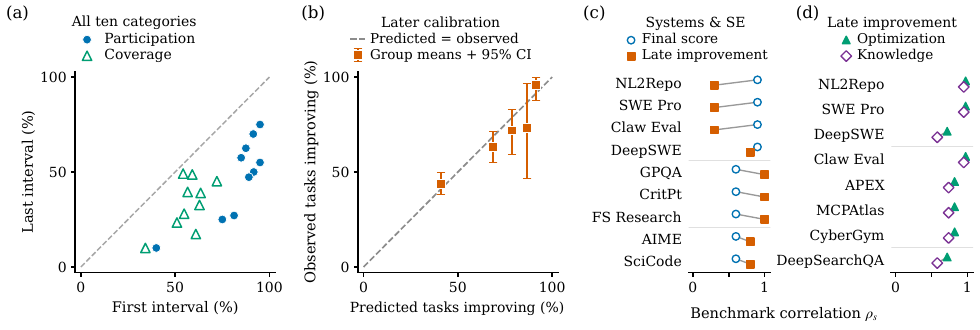}
\caption{Fewer models improve later, with category-specific capability profiles. (a) First/last-interval participation and gain coverage across ten categories. (b) Task-weighted group means of predicted and observed improvement frequencies for all $90$ later observations. Groups use early prediction quantiles; bars show $95\%$ whole-task bootstrap intervals. Dashed lines indicate equality. (c) Systems \& SE: benchmark correlations with final score and terminal improvement frequency. (d) Optimization's eight benchmarks with $\rho_s\ge0.70$ for terminal frequency, compared with Knowledge. Correlations use five models per category; terminal frequency averages the last two intervals. Lines pair the same benchmark. Details and measurements: Appendix~\ref{sec:coverage_details}.}
\label{fig:growth_opportunity_profiles}
\end{figure}

\subsection{Capability Profiles of Long-Horizon Progress}
\label{sec:task_signatures}

We next examine which external benchmarks are associated with high final scores, continued improvement, and earlier gains. We compare all $29$ external measurements with final scores and improvement frequencies using within-category model rank correlations. These associations describe capability profiles rather than isolated benchmark effects. Terminal improvement frequency averages the fraction of tasks improving in the last two intervals. Gain timing distinguishes persistent progress from earlier gain realization (sources: Table~\ref{tab:external_sources}; full profiles: Appendix~\ref{sec:task_signature_details}).

\textbf{Attainment and continued improvement have distinct capability profiles.} Across all $29$ benchmarks, the average signed rank correlation across categories is more positive for final scores than for terminal improvement frequency. This difference persists when removing each model across categories and when varying the terminal window and improvement threshold (Appendix~\ref{sec:task_signature_details}). In Systems \& SE, repository engineering and execution measurements---NL2Repo, SWE-bench Pro~\citep{ding2025nl2repo,deng2025swepro}, and Claw Eval---track final scores more closely than terminal improvement frequency (Figure~\ref{fig:growth_opportunity_profiles}(c)). Scientific reasoning (GPQA, CritPt, FrontierScience Research) tracks continued improvement more closely than final scores; mathematics and coding (AIME, SciCode, DeepSWE) also track improvement~\citep{rein2024gpqa,tian2024scicode}.

\textbf{Engineering and execution recur across task categories.} Optimization exhibits a composite profile spanning repository engineering (NL2Repo, SWE-bench Pro, DeepSWE), execution and tool use (Claw Eval, APEX Agents, MCPAtlas, CyberGym), and information seeking (DeepSearchQA). All eight align with continued improvement in executable solvers; seven retain positive associations across nine choices of time window and improvement threshold. Six also track continued progress in Knowledge, where three of four tasks deliver domain-specific systems (Figure~\ref{fig:growth_opportunity_profiles}(d)). The shared profile reflects implementation and revision work across different subject matter.

\textbf{Fewer late improvements can accompany earlier gain realization.} In AutoLab System Optimization, AA-LCR, GPQA, SciCode, SimpleQA, MMLU-Pro, and MathArena associate with both fewer terminal improvements and earlier gains. By $60\%$ budget, Opus~4.7, Gemini~3.1~Pro, and DeepSeek~V4~Pro have realized $91$--$100\%$ of their observed $20$--$100\%$ gains, versus $71.5\%$ for GLM~5.2 (Figure~\ref{fig:category_time_trajectories}a). Full-trajectory timing preserves this profile (Appendix~\ref{sec:gain_timing}).

\section{From Scaling to Adaptive Continuation}
\label{sec:continuation}

Continued category growth can coexist with little progress on the current task. We evaluate further computation using a response to capability--time growth conditioned on the run's observed state.

\subsection{Conditioning Growth on the Current Run}
\label{sec:conditional_curve}
At each checkpoint, one bounded curve predicts next-interval and remaining-budget gains. Capability--time growth defines reference progress; attained score, recent improvement, and stagnation condition how much the current run gains from that progress.

For each held-out model, we refit capability--time growth on other models' completed histories, with fixed external PCs and a positive time coefficient. Write $r_{mc}(\tau)$ for this reference and $H_k=1-Y_k$ for the remaining score space. Suppressing $m,c$, we use a Weibull response~\citep{weibull1951distribution}:
\begin{equation}
    \begin{aligned}
    D_k(\tau)&=\log\frac{1-r(\tau_k)}{1-r(\tau)},
    &&\tau_k\le\tau\le1,\\
    \widehat G_k(\tau)&=H_k\left[1-\exp\{-\alpha_kD_k(\tau)^{\beta_k}\}\right],
    &&\alpha_k,\beta_k>0.
    \end{aligned}
    \label{eq:conditional_gain}
\end{equation}
$D_k$ measures the reference's reduction in remaining score space on a log scale. At $\alpha_k=\beta_k=1$, the curve transfers its fractional reduction, $1-[1-r(\tau)]/[1-r(\tau_k)]$, to the current run's remaining space. Intensity $\alpha_k$ adjusts gain magnitude; shape $\beta_k$ adjusts its dependence on reference progress. The curve starts at zero, increases with horizon, and stays bounded by $H_k$.

The positive coefficients depend on seven features of the observed prefix, with task effects learned from other models' histories. Equal time intervals can represent different reference progress for different model--category pairs. Runs of the same model on the same task share a reference but receive different forecasts as their observed progress diverges.

We fit this single curve jointly to next-interval and remaining-budget gains, including zero-gain histories. All policy fitting excludes the target model's long-horizon histories; each source model is also excluded when constructing its training references. Source histories on the same task are allowed, while the target run contributes only its observed prefix. Appendix~\ref{sec:source_estimation} gives the feature definitions, regression, and training objective.

\subsection{Purchase One Interval and Reassess}
\label{sec:decision_rule}
For $h\in\{\mathrm n,\mathrm e\}$, set $\tau_{k,\mathrm n}=\tau_{k+1}$ and $\tau_{k,\mathrm e}=1$. The conditional curve gives ordered gains:
\begin{equation}
    \widehat g_{k,h}=\widehat G_k(\tau_{k,h}),
    \qquad 0\le\widehat g_{k,\mathrm n}\le\widehat g_{k,\mathrm e}\le H_k.
    \label{eq:gain_readouts}
\end{equation}
Let $\lambda\ge0$ price each unit of declared budget in score gain. Subtracting computation costs from predicted gains gives the net values~\citep{hay2012computations}:
\begin{equation}
    \widehat Q_{k,h}=\widehat g_{k,h}-\lambda c_{k,h},
    \qquad c_{k,h}=\tau_{k,h}-\tau_k.
    \label{eq:net_values}
\end{equation}
If $\max_h\widehat Q_{k,h}>0$, purchase the next interval and reassess. The remaining-budget forecast allows the policy to continue even when the next interval alone is not worth its cost. Full-score and naturally terminated runs stop. We use source histories to select a nonnegative price or the run-to-end option and keep that choice fixed for the target model. Realized duration is used only for cost evaluation.

Let $Q_{k,h}$ denote equation~\ref{eq:net_values} evaluated with true conditional mean gains given $\mathcal F_k$. With these means at every visited checkpoint, the expected net value of reassessment satisfies
\begin{equation}
    V_k^{\mathrm{seq}}\ge\max(0,Q_{k,\mathrm n},Q_{k,\mathrm e}).
    \label{eq:exact_value_bound}
\end{equation}
Here $V_k^{\mathrm{seq}}$ is the conditional expectation of subsequent gain minus priced declared cost. The plans---stop, one interval, and endpoint---use full declared costs, with $\lambda(1-\tau_k)$ for the endpoint. The bound assumes a finite declared grid, additive costs, history-based actions, and stopping that truncates the same potential trajectory. Proposition~\ref{prop:sequential_value} gives the proof and bounds losses from gain-sign errors.

\subsection{Cost--Performance Tradeoffs}
\label{sec:continuation_results}
We replay $492$ individual AutoLab seed runs over seven target models and $252$ published EdgeBench mean curves over five. Each target provides its external profile and current prefix, with its long-horizon histories held out from training.

Fixed-budget stopping uses a preset cutoff independent of observed progress. Time-based patience stops after a threshold fraction of the declared horizon since the last improvement, while recent-gain rules threshold the gain rate over one or two observed intervals.

\begin{figure}[!htbp]
\centering
\includegraphics[width=\linewidth]{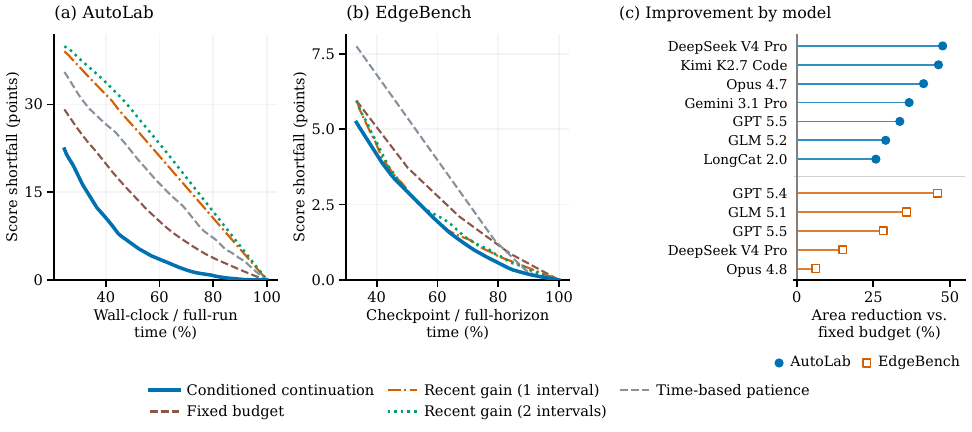}
\caption{Conditioned continuation improves cost--loss tradeoffs. (a--b) Frontiers averaged over replicates then models. (c) Reductions in mean score loss versus fixed-budget stopping; all seven AutoLab and five EdgeBench evaluations improve. Comparisons use the same cost range across policies for each model and replicate; (a--b) display the range shared across each suite's evaluations.}
\label{fig:continuation_frontiers}
\end{figure}

In Figure~\ref{fig:continuation_frontiers}, cost is charged time including initial observation, divided by full-run time: wall-clock time for AutoLab, checkpoint hours for EdgeBench. Loss is the score shortfall relative to completing the same trajectory. We average loss over the policies' shared cost range within each model--replicate, then over replicates and models (normalized frontier area; Appendix~\ref{sec:operating_protocol}).

Conditioned continuation has the lowest mean loss over these cost ranges in both suites. Relative to fixed-budget stopping, mean loss falls from $14.38$ to $8.83$ points on AutoLab and from $2.44$ to $1.77$ on EdgeBench, reductions of $38.6\%$ and $27.4\%$ (Table~\ref{tab:continuation_results}). Mean stopping loss falls for every model (Figure~\ref{fig:continuation_frontiers}c). The available EdgeBench trajectories do not retain run-to-run variation in improvement timing within a task--model pair. This limits the run-specific information available for adaptation and may narrow the margin over fixed-budget stopping.

Saving half the time retains $90.5\%$ of the full-run score on AutoLab and $91.8\%$ on EdgeBench, versus $77.1\%$ and $89.5\%$ for fixed-budget stopping (Table~\ref{tab:matched_tradeoffs}). At $95\%$ score retention, time savings are $38.1\%$ on AutoLab and $38.4\%$ on EdgeBench, versus $17.2\%$ and $28.5\%$ for fixed-budget stopping.

\begin{table}[!htb]
\centering
\caption{Matched operating points, interpolated from the mean frontiers in Figure~\ref{fig:continuation_frontiers}. Score retention is relative to the mean full-run score. Ours denotes conditioned continuation; higher is better. Table~\ref{tab:matched_tradeoffs_full} compares all baselines.}
\label{tab:matched_tradeoffs}
\small
\begin{tabular*}{\linewidth}{@{\extracolsep{\fill}}llrrrr@{}}
\toprule
 & & \multicolumn{2}{c}{\textbf{AutoLab}} & \multicolumn{2}{c}{\textbf{EdgeBench}}\\
\cmidrule(lr){3-4}\cmidrule(lr){5-6}
Matched condition & Reported outcome & Fixed budget & Ours & Fixed budget & Ours\\
\midrule
Save $30\%$ time & Score retained (\%) & $89.9$ & $\mathbf{97.3}$ & $94.7$ & $\mathbf{96.8}$\\
Save $50\%$ time & Score retained (\%) & $77.1$ & $\mathbf{90.5}$ & $89.5$ & $\mathbf{91.8}$\\
\midrule
Retain $95\%$ score & Time saved (\%) & $17.2$ & $\mathbf{38.1}$ & $28.5$ & $\mathbf{38.4}$\\
Retain $90\%$ score & Time saved (\%) & $29.7$ & $\mathbf{50.9}$ & $48.2$ & $\mathbf{55.9}$\\
\bottomrule
\end{tabular*}
\end{table}

To choose an operating point before observing the target run, we calibrate the price on other models' runs to meet a target mean fraction of declared budget. We then fix that price for the target model. At these operating points, continuation saves $32.8\%$ of full-run time on AutoLab with a loss of $1.52$ points ($2.4\%$ relative to full-run scores), and $30.9\%$ on EdgeBench with a loss of $1.19$ points ($3.3\%$). Table~\ref{tab:frozen_operating} reports the source targets and complete operating points.

\section{Related Work}
\label{sec:related_work}

\paragraph{Capability-based scaling.}
Downstream scaling relates benchmark performance to pretraining loss or perplexity~\citep{gadre2025downstream,du2024loss}. Observational scaling and Sloth instead organize performance through latent capabilities, accounting for family-specific compute efficiencies~\citep{ruan2024observational,polo2025sloth}. Shared performance structure also enables efficient evaluation: tinyBenchmarks estimates scores from selected examples~\citep{polo2024tinybenchmarks}, while BenchPress predicts missing benchmark scores from low-rank public score matrices~\citep{zeng2026benchpress}. We use external capability coordinates to describe both starting performance and within-task temporal growth, with separate readouts for each task category.

\paragraph{Long-horizon performance and compute returns.}
Long-horizon evaluations measure agent reliability in units of human task duration~\citep{kwa2025horizon} and compare performance across computation budgets. RE-Bench compares best-of-$k$ agents with human experts across budgets~\citep{wijk2025rebench}; AutoLab evaluates sustained research and engineering~\citep{xu2026autolab}. On AutoLab, \citet{li2026beyondfinalscores} study process behavior, experience reuse, and harness effects. EdgeBench fits benchmark- and task-family growth curves and tests temporal extrapolation~\citep{zhu2026edgebench}. Test-time scaling increases computation through repeated sampling~\citep{brown2024monkeys} or longer reasoning~\citep{muennighoff2025s1}; allocation depends on problem difficulty~\citep{snell2025scaling}. Heterogeneous success probabilities explain sampling power laws~\citep{schaeffer2025monkeys}, while transition models describe iterative self-correction~\citep{yang2025selfcorrection}. We study capability-dependent growth, gain concentration, and continuation across tasks and models. 

\paragraph{Adaptive stopping and computation value.}
Rational metareasoning compares expected decision-quality gains with compute costs~\citep{hay2012computations}. AgentStop predicts unsuccessful episodes using token probabilities and trace features~\citep{pham2026agentstop}, while \citet{ruan2026earlyabort} use a recall-controlled cascade of hidden-state probes. Our policy prices next-interval and remaining-budget gains from a bounded, prefix-conditioned response to capability-dependent progress.

\section{Discussion and Conclusion}
\label{sec:conclusion}

Model capabilities organize the level and timing of long-horizon progress. Average scores rise across the ten categories while participation falls from the first to last interval. Shrinking participation concentrates gains; capability-dependent growth helps explain each model's later improvement frequency. Evaluations should report performance, gain timing, and the breadth of improvement.

Conditioning category-level growth on observed progress forecasts immediate and delayed gains for a specific run. The resulting policy improves time--score tradeoffs in replay on AutoLab individual runs and EdgeBench published mean curves. Broader model coverage and individual EdgeBench trajectories would extend this evaluation. Capability-based scaling can therefore describe long-horizon progress and guide decisions about the value of further computation.

\FloatBarrier
\clearpage
\section*{Reproducibility Statement}

Appendices~\ref{sec:method_details}--\ref{sec:continuation_fitting} describe the data preprocessing, evaluation splits, fitting procedures, hyperparameters, and continuation replay protocol, and provide the mathematical proofs. The EdgeBench analyses use officially released task--model mean checkpoint scores. The companion repository is being prepared with sourced external benchmark inputs, code for the external capability representation and category-level response, and an index of already-public trajectories. Our own AutoLab individual-run trajectories and the inputs needed to replay the run-level continuation results remain withheld pending redaction. Consequently, the paper's trajectory-based results cannot yet be reproduced from the planned public artifact.

\bibliographystyle{unsrtnat}
\bibliography{references}

\clearpage
\appendix
\section{Representation and Fitting Details}
\label{sec:method_details}

\subsection{External Measurements and the Shared PCs}

External scores are assembled from BenchPress~\citep{zeng2026benchpress} and supplementary published benchmark results. Table~\ref{tab:external_sources} lists the 29 measurements and links their benchmark papers or official releases. External benchmark scores are put on a $[0,1]$ scale. Let $\psi$ denote the logit transformation after clipping the score to $[0.005,0.995]$. For benchmark $j\in\{1,\ldots,J\}$ with $J=29$, let $\bar\psi_j$ and $s_j$ be the reference mean and standard deviation of this transformed score. Let $\omega_j$ be the inverse square root of the number of measurements in its benchmark family. An observed entry $b_{mj}$ produces
\begin{equation}
    x_{mj}=\frac{\omega_j}{\sqrt{\sum_{\ell=1}^{J}\omega_\ell^2}}
       \frac{\psi(b_{mj})-\bar\psi_j}{s_j}.
    \label{eq:external_preprocessing}
\end{equation}
Family weighting prevents several related benchmark variants from receiving proportionally more total squared weight simply because more variants are available. FrontierScience and HMMT each have two measurements; the SWE family has three; other measurements receive unit family weight.

The $82$ reference entries provide $901$ observed cells out of $82\times29$. For the ten target model versions, $248$ of $290$ cells use published scores and $42$ use frozen external-only estimates. The missing-entry estimator uses a rank-two external fit and regularized projection with coefficient $0.1$; observed entries are preserved exactly. This estimator supplies missing values, whereas the five downstream PCs come from a separately estimated full-rank covariance. The covariance update uses conditional first and second moments for missing entries, following the expectation--maximization principle~\citep{dempster1977em}, and linear shrinkage toward the identity~\citep{ledoit2004covariance}. Its shrinkage coefficient $0.1$ is selected by hiding observed external entries in reference-model validation splits. The covariance is then family-weighted before extracting its leading eigenvectors. The shared PC coordinates are not whitened. All external preprocessing and target completion are fixed before long-horizon response fitting.

\paragraph{Dimensionality and interpretation.}
The fixed-dimension sweep in Table~\ref{tab:pc_dimensions} informs our five-PC choice through compactness and later-checkpoint errors. At each dimension, response coefficients, regularization, and temporal structure use only early checkpoints. Selecting dimension from last-early-checkpoint errors alone, over one to 29 PCs, gives either a common dimension of 19 or category-specific dimensions; the table reports both protocols. Individual PC names interpret the fitted external loadings: PC1 is a broad positive direction; subsequent PCs describe relative benchmark patterns. PC signs are conventional, with contrast loadings expressing relative performance across measurements.

\begin{table}[ht]
\centering
\caption{Dimension sweep and early-only dimension selection on the same temporal panels. Metrics average the ten categories equally; RMSE is in score points.}
\label{tab:pc_dimensions}
\begin{tabular}{lrrrrr}
\toprule
Number of PCs & 1 & 3 & 5 & 7 & 29\\
\midrule
Mean later-score RMSE & 4.861 & 3.180 & 2.315 & 2.288 & 2.504\\
\bottomrule
\end{tabular}
\par\medskip
\begin{tabular}{lrrr}
\toprule
Early-only dimension selection & Selected PCs & RMSE & $R^2$\\
\midrule
Common across categories & 19 & 2.486 & 0.824\\
Category-specific & 2--23 & 2.434 & 0.830\\
\bottomrule
\end{tabular}
\end{table}

\subsection{Response on the Score Scale}
\label{sec:response_details}

Differentiating the logistic power-law response with respect to log time gives
\begin{equation}
    \frac{\partial\widehat q_{mc}(\tau)}{\partial\log\tau}
    =\kappa_{mc}\widehat q_{mc}(\tau)
      \bigl[1-\widehat q_{mc}(\tau)\bigr].
    \label{eq:response_derivative}
\end{equation}
The same fitted logit growth coefficient yields different score changes depending on the current performance level, with changes diminishing in magnitude near saturation. Equation~\ref{eq:response_derivative} translates the temporal coefficient estimated from category trajectories into score gains along the fitted curve.

\paragraph{Matched response-family comparison.}
We compare the logistic response with its affine counterpart, $\widehat q=A+Bg(\tau)$, using the complete 29-dimensional external profile in both cases. The task and model panels, early checkpoints, native-score squared-error objective, and $90$ later evaluation means match the five-PC analysis. Each family selects its regularization and shared-versus-capability-dependent temporal coefficient on early checkpoints, then refits on the full prefix. Table~\ref{tab:response_links} reports all ten categories. The logistic family reduces mean category RMSE from $2.83$ to $2.50$ points ($11.4\%$), improving eight categories. This comparison evaluates the response family with the external input space held fixed; the five-PC dimension comparison appears in Table~\ref{tab:pc_dimensions}.

\begin{table}[htbp]
\centering
\caption{Matched link-family comparison using all $29$ external inputs. Both families select regularization and slope structure on early checkpoints and predict the same $90$ later means. Errors are RMSE in score points.}
\label{tab:response_links}
\begin{tabular}{lrr}
\toprule
Category & Affine in log time & Logistic power law\\
\midrule
CUDA & 4.42 & 4.02\\
Model Development & 5.68 & 5.73\\
Puzzle \& Challenge & 5.86 & 4.28\\
System Optimization & 3.91 & 3.48\\
Formal & 2.03 & 1.53\\
Games & 1.09 & 1.07\\
Knowledge & 1.65 & 1.57\\
Optimization & 0.41 & 0.54\\
Scientific \& ML & 1.33 & 0.97\\
Systems \& SE & 1.87 & 1.82\\
\midrule
Macro average & 2.83 & 2.50\\
\bottomrule
\end{tabular}
\end{table}

\paragraph{From PC readouts to trajectories.}
Figure~\ref{fig:capability_growth_bridge} connects external loadings, category readouts, and growth in actual time. In the regularized Formal fit, PC2 enters the early-performance readout positively and the temporal readout negatively: its association with the starting level differs from its association with subsequent change. System Optimization illustrates how much of the observed gain occurs before the later evaluation checkpoints (Section~\ref{sec:gain_timing}).

\begin{figure}[htbp]
\centering
\includegraphics[width=\linewidth]{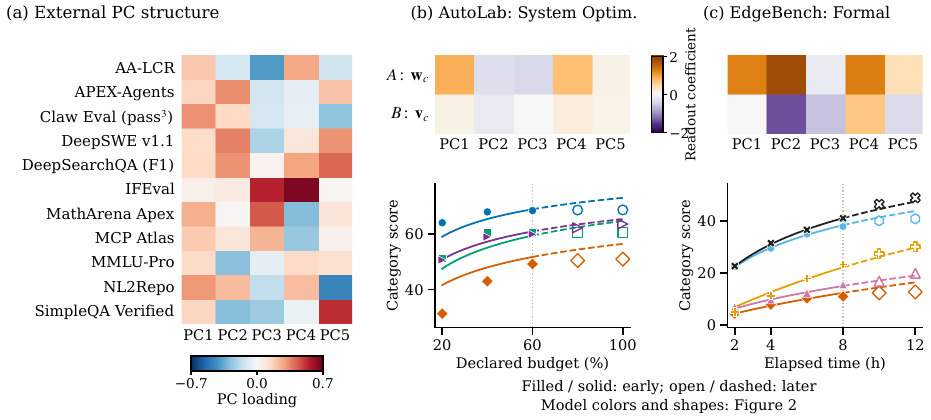}
\caption{From external capabilities to time responses. (a) The union of the two strongest positive and negative loadings per PC shows $11$ benchmarks; PC1 has only positive loadings. All $29$ benchmarks enter the representation; full loadings appear in Figure~\ref{fig:capability_basis}. (b--c) Illustrative categories: top, fitted PC coefficients $\mathbf w_c,\mathbf v_c$ for $A,B$ (intercepts omitted); bottom, all $50$ observed category-mean scores across four/five models on $12/8$ fixed tasks and their fitted curves. Filled/solid denotes early observations/fits; open/dashed denotes later observations/predictions. Both use the five-PC response of Figure~\ref{fig:category_responses}. Loadings and readout coefficients use separate color scales.}
\label{fig:capability_growth_bridge}
\end{figure}

\subsection{Category Response Estimation and Metrics}

\paragraph{Fixed trajectory panels.}
We first retain task--model mean curves with observed scores at every analysis checkpoint in Table~\ref{tab:temporal_protocol}. Within each category, we enumerate model subsets of size at least four and intersect their available tasks. The primary panel maximizes the number of observed task--model pairs; ties favor more tasks, then lexicographic task and model identifiers. Selection uses availability, not score values. This yields the fixed panels in Table~\ref{tab:task_content}, shared by the temporal forecasts and gain-distribution analysis. AutoLab means average available replicates; EdgeBench supplies task--model means. The same tasks and models enter every checkpoint within a category. Continuation uses the separately defined replay population below (Appendix~\ref{sec:operating_protocol}).

\begin{table}[ht]
\centering
\caption{Temporal prediction protocol. AutoLab time is a fraction of declared budget; EdgeBench time is elapsed hours.}
\label{tab:temporal_protocol}
\begin{tabular}{lll}
\toprule
Benchmark & Early coefficient fit & Later evaluation\\
\midrule
AutoLab & $20\%,40\%,60\%$ & $80\%,100\%$\\
EdgeBench & $2,4,6,8$ h & $10,12$ h\\
\bottomrule
\end{tabular}
\end{table}

For category $c$, let $\mathcal E_c$ index its early checkpoints and $n_c=|\mathcal M_c||\mathcal E_c|$. The response parameters are $\theta_c=(a_c,b_c,\mathbf w_c,\mathbf v_c)$. Their fitting objective is
\begin{equation}
    \mathcal L_c(\theta_c)
    =\frac{1}{n_c}\sum_{m\in\mathcal M_c}\sum_{k\in\mathcal E_c}
       \bigl[\widehat q_{mc}(\tau_k)-\bar s_{mc}(\tau_k)\bigr]^2
       +\rho_c\bigl(\|\mathbf w_c\|_2^2+\|\mathbf v_c\|_2^2\bigr)
       +10^{-9}a_c^2+10^{-7}b_c^2.
    \label{eq:temporal_objective}
\end{equation}
The small intercept penalties stabilize the numerical fit. We compare the full response with its restriction $\mathbf v_c=\mathbf0$, and use $\rho_c\in\{10^{-5},10^{-4},10^{-3},10^{-2},10^{-1}\}$. Each candidate is fitted before the last early checkpoint and scored at that checkpoint; ties prefer the restricted response and then stronger regularization. The selected specification is refitted on all early points. The temporal fit leaves $B_{mc}$ unconstrained; all increments used in the opportunity analysis are positive in the selected fits. The continuation reference instead imposes a positive slope by construction.

For the later model--checkpoint set $\mathcal T_c$, write $y_{mk}=\bar s_{mc}(\tau_k)$ and $\bar y_c=|\mathcal T_c|^{-1}\sum_{(m,k)\in\mathcal T_c}y_{mk}$. We report
\begin{align}
    \mathrm{MSE}_c&=\frac{1}{|\mathcal T_c|}\sum_{(m,k)\in\mathcal T_c}
                 [\widehat q_{mc}(\tau_k)-y_{mk}]^2,
    &\mathrm{RMSE}_c&=100\sqrt{\mathrm{MSE}_c},\\
    R_c^2&=1-\frac{\sum_{(m,k)\in\mathcal T_c}[\widehat q_{mc}(\tau_k)-y_{mk}]^2}
                       {\sum_{(m,k)\in\mathcal T_c}(y_{mk}-\bar y_c)^2}.
    \label{eq:temporal_metrics}
\end{align}
Macro metrics average category values equally, with the square root taken within each category for RMSE.

\subsection{Temporal Controls without External Scores}
\label{sec:temporal_controls}

All controls use the identical fixed model--task panels, early checkpoints, score-space squared-error loss, and later evaluation points. The persistence control retains the last early score. Independent time curves fit $\sigma(A_m+B_mg(\tau))$ separately to each model's early category means, using only the numerical intercept and slope penalties in equation~\ref{eq:temporal_objective}. The model-indicator control replaces the five PC coordinates with one-hot model indicators and uses the same early-only regularization and shared-versus-model-dependent slope selection as the capability response. Its final coefficients are refitted on all early points. Table~\ref{tab:temporal_controls} retains every category.

\begin{table}[htbp]
\centering
\caption{Later-score RMSE in points under matched temporal evaluation. Indicators denotes regularized model identities; PCs denotes the selected five-PC response.}
\label{tab:temporal_controls}
\begin{tabular}{lrrrr}
\toprule
Category & Last score & Independent & Indicators & PCs\\
\midrule
CUDA & 4.19 & 4.95 & 4.18 & 3.75\\
Model Development & 6.19 & 3.46 & 6.06 & 3.79\\
Puzzle \& Challenge & 5.83 & 4.27 & 4.41 & 4.08\\
System Optimization & 1.71 & 3.97 & 3.41 & 3.46\\
Formal & 4.40 & 1.86 & 1.66 & 1.93\\
Games & 1.95 & 1.10 & 1.01 & 1.03\\
Knowledge & 2.17 & 2.05 & 1.90 & 1.53\\
Optimization & 1.11 & 0.58 & 0.50 & 0.57\\
Scientific \& ML & 2.25 & 1.04 & 1.50 & 1.17\\
Systems \& SE & 1.19 & 1.90 & 1.91 & 1.86\\
\midrule
Macro average & 3.10 & 2.52 & 2.66 & 2.31\\
\bottomrule
\end{tabular}
\end{table}

With four or five models per category, the intercept-plus-PC design has full row rank; regularization selects among coefficient vectors with identical fitted indices. The coefficients describe regularized associations, with predictive performance evaluated by the later-score controls in Table~\ref{tab:temporal_controls}.

\section{Gain Decomposition and Opportunity Modeling}
\label{sec:coverage_details}

\subsection{The Coverage Identity and Its Change over Time}

For positive gains, $M_c C$ is the order-two Hill effective number~\citep{hill1973diversity}, $(\sum_m\Delta_m)^2/\sum_m\Delta_m^2$; dividing by $M_c$ gives normalized gain coverage. Fix a task and interval, and suppress their indices. Let $K_+$ be the number of positive increments, $\bar\Delta_+$ their mean, and $v_+$ their variance with divisor $K_+$. For $K_+>0$,
\begin{equation}
    C=\frac{(K_+\bar\Delta_+)^2}{M_cK_+(\bar\Delta_+^2+v_+)}
     =\frac{K_+}{M_c}\frac{1}{1+v_+/\bar\Delta_+^2}=P\,U.
\end{equation}
When $K_+=0$, we set $P=C=0$ and leave the positive-gain balance undefined. For a task with positive gains in both its early and late intervals, denoted by subscripts $\mathrm{early}$ and $\mathrm{late}$, the exact symmetric decomposition is
\begin{align}
    C_{\mathrm{late}}-C_{\mathrm{early}}
    ={}&(P_{\mathrm{late}}-P_{\mathrm{early}})
                  \frac{U_{\mathrm{late}}+U_{\mathrm{early}}}{2}\nonumber\\
       &+(U_{\mathrm{late}}-U_{\mathrm{early}})
                  \frac{P_{\mathrm{late}}+P_{\mathrm{early}}}{2}.
    \label{eq:coverage_change}
\end{align}
The two terms exactly decompose the observed coverage change into participation and balance components. We compute them per task, then average within category and across categories. The early and late intervals are $20$--$40\%$ and $80$--$100\%$ for AutoLab, and $2$--$4$ h and $10$--$12$ h for EdgeBench. Participation uses all $71$ fixed tasks. The paired decomposition uses the $62$ tasks with positive gains in both intervals, giving contributions of $-23.95$ percentage points from participation and $+2.92$ from balance. A nonparametric bootstrap~\citep{efron1979bootstrap} resamples whole tasks within categories, keeping the model set fixed, and gives a $95\%$ interval of $29.5$--$40.8$ percentage points for the macro participation decline. Raising the improvement threshold from numerical zero to $0.01$ or $0.1$ score points preserves the decline in all ten category means.

\paragraph{Expanded trajectory panels.}
Table~\ref{tab:participation_panel_sensitivity} extends the analysis from the primary panels to all complete task--model curves available at the panel-selection stage. For each task $i$, participation divides the number of improving models by the number observed on that task; this model set stays fixed across the first and last intervals. We average tasks within categories, then the ten categories equally. The expansion increases coverage from $71$ tasks and $329$ task--model pairs to $80$ tasks and $402$ pairs, without imputing incomplete curves. Participation declines in every category under both definitions, averaging $35.2$ and $33.5$ percentage points, respectively.

\begin{table}[!htbp]
\centering
\caption{Participation declines under both panel definitions. Counts are tasks; decline is the task-averaged first-to-last reduction in percentage points. Expanded panels retain every available complete task--model curve, with each task\textquotesingle s model set fixed across intervals.}
\label{tab:participation_panel_sensitivity}
\small
\begin{tabular*}{\linewidth}{@{\extracolsep{\fill}}lrrrr@{}}
\toprule
 & \multicolumn{2}{c}{Fixed primary panel} & \multicolumn{2}{c}{Expanded panel}\\
\cmidrule(lr){2-3}\cmidrule(lr){4-5}
Category & Tasks & Decline (pp) & Tasks & Decline (pp)\\
\midrule
CUDA & 3 & $41.7$ & 4 & $32.5$\\
Model Development & 2 & $50.0$ & 2 & $42.5$\\
Puzzle \& Challenge & 5 & $30.0$ & 9 & $43.0$\\
System Optimization & 12 & $54.2$ & 14 & $48.3$\\
\midrule
Formal & 8 & $20.0$ & 8 & $20.0$\\
Games & 8 & $27.5$ & 8 & $27.5$\\
Knowledge & 4 & $40.0$ & 4 & $40.0$\\
Optimization & 14 & $21.4$ & 15 & $20.0$\\
Scientific \& ML & 4 & $25.0$ & 4 & $21.2$\\
Systems \& SE & 11 & $41.8$ & 12 & $40.4$\\
\midrule
Ten-category mean & & $35.2$ & & $33.5$\\
\bottomrule
\end{tabular*}
\end{table}

\subsection{Opportunity Response and Later Calibration}
\label{sec:opportunity_calibration}

For each model--category--interval observation, let $f_{mc,k}$ be its empirical fraction of tasks with positive increments. The opportunity model is logistic regression on log predicted mean gain:
\begin{equation}
    \operatorname{logit}\widehat\pi_{mc,k}
       =\gamma\bigl(\log\widehat\mu_{mc,k}-\log\mu_c^\star\bigr).
    \label{eq:opportunity_logit}
\end{equation}
In implementation, log gain is standardized on early observations; the reported scale and exponent absorb that standardization. A category intercept and one shared slope minimize binomial cross-entropy against $f_{mc,k}$, with equal total category weight and weights proportional to task counts within category. Only the standardized slope receives an L2 penalty, selected from $\{0,0.001,0.01,0.1\}$ using the last early interval and category-macro Brier score~\citep{brier1950verification}. The slope is fitted without a sign constraint and is positive in the selected fit. Refitting the opportunity parameters under whole-task resampling gives a $95\%$ interval of $1.54$--$2.05$ for $\gamma$, conditional on the fixed capability curves and selected regularization.

Figure~\ref{fig:gain_opportunity_relation}a uses the $20,40,60,80$th percentiles of early log gain ratios as five-bin boundaries, extending the outer bins to include all later points. Each square places the geometric mean gain ratio against the unweighted mean observed task fraction within that bin. From left to right, the groups contain $60,18,5,6,1$ later observations; all $90$ enter these descriptive summaries. The orange segments connect adjacent group averages in increasing predicted-gain order; they do not represent a trajectory through time. The black response curve is unchanged.

Figure~\ref{fig:growth_opportunity_profiles}(b) evaluates later calibration with six bins defined by the early predicted-probability quantiles at $j/6$, $j=1,\ldots,5$, extending the outer boundaries to include all later predictions. Five bins contain later observations. Squares show task-count-weighted mean predicted probabilities and pooled observed improvement fractions. Figure~\ref{fig:gain_opportunity_relation}b adds all $90$ model--category--interval fractions as individual points. The groups contain $51,22,9,5,3$ fractions, representing $372,168,64,30,24$ task--model--interval entries, respectively. Error bars are $95\%$ percentile intervals from $5{,}000$ whole-task bootstrap replicates within categories, retaining each sampled task's models and intervals; predictions and bin boundaries remain fixed.

\paragraph{Matched input comparison.}
To evaluate the input to equation~\ref{eq:opportunity_response}, we replace log predicted gain with log interval midpoint, retaining category intercepts, a shared slope, the early fitting intervals, and the regularization-selection procedure. Both responses use these category intercepts and a single input, without model-specific intercepts. On all $90$ later model--category--interval observations, category-macro Brier scores are $0.2071$ for predicted gain and $0.2187$ for time. For an observed improving-task fraction $f$ and prediction $\widehat\pi$, the Brier contribution is $f(1-\widehat\pi)^2+(1-f)\widehat\pi^2$; this averages squared probability error over the tasks' binary improvement outcomes. We average contributions within each category and then across the ten categories.

\begin{figure}[tbp]
\centering
\includegraphics[width=\linewidth]{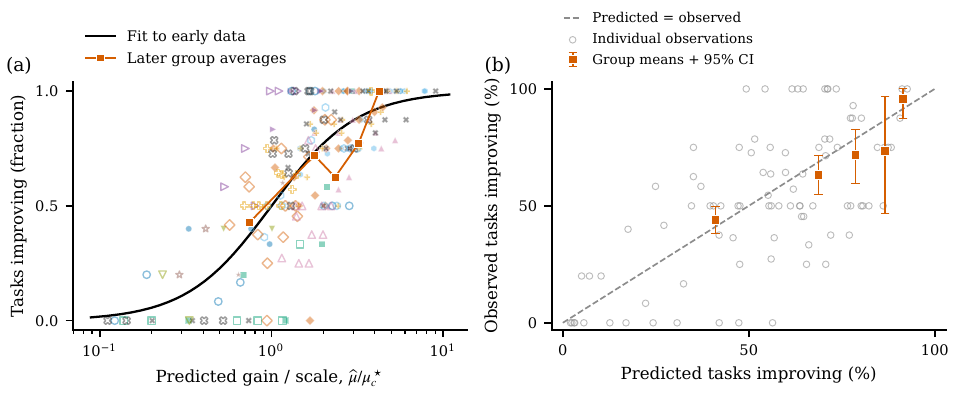}
\caption{The gain--opportunity response and later calibration. (a) All $209$ model--category--interval observations (filled: early; open: later). The black curve fits early improvement frequencies; orange squares connect later averages in five groups defined by early gain-ratio quantiles. (b) All $90$ later fractions (gray), with the same task-weighted means and $95\%$ whole-task bootstrap intervals as Figure~\ref{fig:growth_opportunity_profiles}(b); dashed line: predicted $=$ observed.}
\label{fig:gain_opportunity_relation}
\end{figure}

\subsection{Task-Grounded Interpretation of Progress}
\label{sec:task_signature_details}

The task-signature analysis relates the same $29$ external measurements to observed trajectories on the fixed $71$-task panel. Table~\ref{tab:task_content} summarizes the work performed in each category. All external measurements named in the main-text cases have published scores for every model in their respective panel. Spearman $\rho_s$ is computed across the four or five models within a category, with ties assigned average ranks. These exploratory associations describe task- and stage-related performance patterns; the task counts do not constitute additional independent model samples.

\begin{table}[htbp]
\centering
\small
\setlength{\tabcolsep}{3pt}
\caption{Work performed by the fixed task panels. Counts are models/tasks. The first four categories are AutoLab; the remaining six are EdgeBench. The machine-readable supplement retains all 71 task descriptions.}
\label{tab:task_content}
\begin{tabular}{llp{7.0cm}}
\toprule
Category & Models/tasks & Task process\\
\midrule
CUDA & 4/3 & Optimize decoding, geometric correspondence, and elliptic-curve CUDA kernels.\\
Model Development & 4/2 & Select fine-tuning data and optimize an online serving engine.\\
Puzzle \& Challenge & 4/5 & Reduce program or model complexity under correctness and accuracy constraints.\\
System Optimization & 4/12 & Improve implementations of cryptography, retrieval, storage, and numerical kernels.\\
Formal & 5/8 & Complete interdependent Lean/Coq proofs in multi-file projects.\\
Games & 5/8 & Explore interactive fiction or implement game-playing and management agents.\\
Knowledge & 5/4 & Build three domain-specific systems and produce one examination-document suite.\\
Optimization & 5/14 & Implement and revise solvers for routing, packing, constraints, and search.\\
Scientific \& ML & 4/4 & Implement and evaluate reinforcement learning, inverse models, and graph learning.\\
Systems \& SE & 5/11 & Implement repositories and specified features; optimize throughput and kernels.\\
\bottomrule
\end{tabular}
\end{table}

For a terminal window containing $w$ intervals and an improvement threshold $\epsilon$ in score points, define
\begin{equation}
    f^{(w,\epsilon)}_{mc}
    =\frac{1}{w|\mathcal I_c|}
      \sum_{k=N-w}^{N-1}\sum_{i\in\mathcal I_c}
      \mathbf 1\{100\Delta_{mi,k}>\epsilon\}.
    \label{eq:terminal_improvement_fraction}
\end{equation}
Thus each interval contributes its fraction of tasks improving, and the selected intervals receive equal weight. The main setting uses $w=2$ and positive gain. Sensitivity checks cross $w\in\{1,2,3\}$ with $\epsilon\in\{0,0.1,0.5\}$, retaining every task and model. The three suffixes cover $80$--$100\%$, $60$--$100\%$, and $40$--$100\%$ for AutoLab, and $10$--$12$, $8$--$12$, and $6$--$12$ hours for EdgeBench. They overlap; the three-interval suffix can contain fitted checkpoints. This is a descriptive window check, with the temporal fitting protocol unchanged.

\paragraph{Aggregate profiles across all external measurements.}
Table~\ref{tab:aggregate_capability_profiles} averages the signed Spearman correlations over all $29$ benchmarks within each category, then across categories with equal weight. Final scores have a mean association of $0.454$, compared with $0.186$ for terminal improvement frequency, a difference of $0.268$. The difference is positive in seven categories; Games, Scientific \& ML, and Systems \& SE have more positive terminal-frequency associations. The aggregate contrast motivates separate profiles for attainment and continued improvement, with the constituent benchmarks varying by task category.

The difference remains positive in four descriptive sensitivity checks. Removing each model from every category in which it appears and recomputing the correlations gives differences of $0.089$--$0.355$. The nine window--threshold combinations above give $0.268$--$0.432$. Giving equal weight to distinct benchmark rankings within each category yields $0.202$. Retaining only benchmarks with published scores for every model in the category yields $0.279$, using $9$--$27$ measurements per category. These checks preserve equal category weighting; undefined correlations from constant rankings are omitted within a category. They measure the stability of the observed aggregate contrast across model membership and measurement choices.

\begin{table}[!htbp]
\centering
\caption{Attainment and terminal improvement have different external benchmark profiles. Each entry averages signed model-rank correlations over all $29$ measurements within a category; the final row averages categories equally. Difference is final-score minus terminal-frequency association.}
\label{tab:aggregate_capability_profiles}
\small
\begin{tabular*}{\linewidth}{@{\extracolsep{\fill}}lrrr@{}}
\toprule
Category & Final score & Terminal frequency & Difference\\
\midrule
CUDA & $0.290$ & $-0.222$ & $+0.511$\\
Model Development & $0.250$ & $-0.250$ & $+0.500$\\
Puzzle \& Challenge & $0.473$ & $-0.364$ & $+0.837$\\
System Optimization & $0.304$ & $-0.051$ & $+0.354$\\
\midrule
Formal & $0.671$ & $0.171$ & $+0.499$\\
Games & $0.444$ & $0.601$ & $-0.156$\\
Knowledge & $0.538$ & $0.259$ & $+0.279$\\
Optimization & $0.616$ & $0.392$ & $+0.224$\\
Scientific \& ML & $0.339$ & $0.657$ & $-0.319$\\
Systems \& SE & $0.616$ & $0.671$ & $-0.055$\\
\midrule
Ten-category mean & $0.454$ & $0.186$ & $+0.268$\\
\bottomrule
\end{tabular*}
\end{table}

\paragraph{Composite profiles and their constituent measurements.}
Tables~\ref{tab:composite_profiles_main} and~\ref{tab:composite_profiles_additional} list every benchmark with $|\rho_s|\geq0.70$ under the stated outcome, grouping equal rounded correlations for readability. This cutoff indexes the descriptive scan; it neither selects predictor inputs nor limits the number of profile members. All $290$ category--benchmark pairs, their nine-condition results, published-score masks, and within-panel benchmark rank equivalences are retained in the numerical supplement. For example, Optimization's NL2Repo, SWE-bench Pro, and Claw Eval share one ranking, while APEX Agents and CyberGym share another. Their different task content broadens the interpretation of the profile; the benchmark count is not an independent-sample count.

\begin{table}[htbp]
\centering\small
\setlength{\tabcolsep}{3pt}
\caption{Composite profiles in the main-text categories. All external measurements with $|\rho_s|\geq0.70$ for terminal improvement frequency are listed, with no cap on profile size. Parentheses give Spearman correlations; category counts are models/tasks. The default uses the last two intervals and positive gain; the additional increment threshold is stated explicitly. An asterisk identifies a benchmark with at least one frozen estimated input in that panel. Correlated members describe a joint performance profile, not separate effects.}
\label{tab:composite_profiles_main}
\begin{tabular}{p{2.05cm}p{9.0cm}}
\toprule
\raggedright Category & \raggedright Concrete benchmark members and observed associations\tabularnewline
\midrule
\raggedright Systems \& SE (5/11) & \raggedright BrowseComp, FrontierScience-Olympiad*, HLE, IMO-AnswerBench, SimpleQA, SWE-bench Multilingual, Terminal-Bench, Toolathlon (+0.70); AIME 2026, DeepSWE, SciCode (+0.80); HMMT Feb 2026, HMMT Nov 2025 (+0.82); MathArena, $\tau^3$-Banking (+0.90); CritPt, FrontierScience-Research, GPQA (+1.00)\tabularnewline[4pt]
\raggedright Optimization (5/14) & \raggedright DeepSWE, DeepSearchQA (+0.72); APEX Agents, CyberGym, MCPAtlas (+0.82); Claw Eval, NL2Repo, SWE-bench Pro (+0.97)\tabularnewline[4pt]
\raggedright Knowledge (5/4) & \raggedright AA-LCR (-0.74); APEX Agents, CyberGym, MCPAtlas (+0.74); Claw Eval, NL2Repo, SWE-bench Pro (+0.95)\tabularnewline[4pt]
\raggedright System Optimization (4/12) & \raggedright MathArena, MMLU-Pro (-1.00); GPQA, SciCode, SimpleQA (-0.80); AA-LCR, HMMT Nov 2025* (-0.74); IMO-AnswerBench (+1.00)\tabularnewline[4pt]
\bottomrule
\end{tabular}
\end{table}

Optimization's eight-member profile spans repository construction, agent execution, and information seeking. Seven members stay positive in all nine conditions; DeepSWE is positive in seven. In Systems \& SE, final-score correlations are $0.90$ for NL2Repo, SWE-bench Pro, Claw Eval, DeepSWE, Terminal-Bench, and SWE-bench Multilingual. The first three have terminal-frequency correlations of $0.30$, while DeepSWE remains at $0.80$. Scientific reasoning (GPQA, CritPt, FrontierScience Research), mathematics (MathArena, AIME), and scientific programming (SciCode) are all positive across the nine conditions. This is an overlapping capability profile, with distinct associations for attainment and continued improvement.

Formal exhibits increment-size dependence within the same terminal window. AA-LCR correlations are $0.80$, $0.87$, and $0.67$ at thresholds $0$, $0.1$, and $0.5$ points. At $>0.5$ points, AIME, SciCode, and $\tau^3$-Banking each correlate at $0.97$; HMMT February at $0.95$; GPQA, CritPt, and FrontierScience Research at $0.87$. Thus long-input integration, mathematical/scientific reasoning, programming, and multi-step execution provide complementary task interpretations. The thresholds distinguish observed increment sizes, not time stages or proof difficulty.

Table~\ref{tab:capability_window_sensitivity} provides individual-member window checks. These retain the original measurements and relate task processes to their observed progress patterns.

\begin{table}[htbp]
\centering
\small
\setlength{\tabcolsep}{3pt}
\caption{Window and gain-threshold sensitivity of the representative associations. Entries are Spearman correlations across fixed model panels with the fraction of tasks improving, averaged over the indicated final intervals. Last two is the main window; thresholds are in 0--100 score points. The first five rows concern AutoLab, the remainder EdgeBench. Every displayed external measurement is published for all models in its panel.}
\label{tab:capability_window_sensitivity}
\begin{tabular}{llrrrrrrrrr}
\toprule
Category & Benchmark & \multicolumn{3}{c}{$100\Delta>0$} & \multicolumn{3}{c}{$100\Delta>0.1$} & \multicolumn{3}{c}{$100\Delta>0.5$}\\
 & & Last 1 & Last 2 & Last 3 & Last 1 & Last 2 & Last 3 & Last 1 & Last 2 & Last 3\\
\midrule
CUDA & SWE Pro & $+0.89$ & $+0.74$ & $+0.74$ & $+0.74$ & $+0.60$ & $+0.60$ & $+0.74$ & $+0.60$ & $+0.60$\\
Model Dev. & MathArena & $-0.45$ & $-0.80$ & $-0.80$ & $-0.45$ & $-0.80$ & $-0.80$ & $-0.45$ & $-0.63$ & $-0.40$\\
Puzzle & SciCode & $-0.89$ & $-0.95$ & $-0.95$ & $-0.89$ & $-0.95$ & $-0.95$ & $-0.89$ & $-0.95$ & $-0.77$\\
System Opt. & MMLU-Pro & $-1.00$ & $-1.00$ & $-1.00$ & $-1.00$ & $-1.00$ & $-1.00$ & $-0.95$ & $-1.00$ & $-1.00$\\
System Opt. & AA-LCR & $-0.74$ & $-0.74$ & $-0.74$ & $-0.74$ & $-0.74$ & $-0.74$ & $-0.89$ & $-0.74$ & $-0.74$\\
\midrule
Formal & AA-LCR & $+0.67$ & $+0.80$ & $+0.87$ & $+0.67$ & $+0.87$ & $+0.82$ & $+0.67$ & $+0.67$ & $+0.60$\\
Games & GPQA & $+0.67$ & $+0.97$ & $+0.97$ & $+0.89$ & $+0.97$ & $+1.00$ & $+0.90$ & $+0.90$ & $+0.97$\\
Knowledge & NL2Repo & $+0.89$ & $+0.95$ & $+0.79$ & $+0.53$ & $+0.67$ & $+0.74$ & $+0.53$ & $+0.36$ & $+0.82$\\
Optimization & NL2Repo & $+0.89$ & $+0.97$ & $+0.90$ & $+0.67$ & $+0.40$ & $+0.82$ & $+0.58$ & $+0.50$ & $+0.80$\\
Scientific/ML & DeepSWE & $+0.77$ & $+1.00$ & $+1.00$ & $+0.00$ & $+0.63$ & $+0.80$ & $-0.77$ & $+0.89$ & $+0.74$\\
Systems/SE & DeepSWE & $+0.67$ & $+0.80$ & $+0.82$ & $+0.67$ & $+0.67$ & $+0.56$ & $+0.70$ & $+0.36$ & $+0.31$\\
Systems/SE & SWE Pro & $+0.10$ & $+0.30$ & $+0.41$ & $+0.10$ & $+0.10$ & $+0.10$ & $+0.30$ & $-0.21$ & $-0.21$\\
\bottomrule
\end{tabular}
\end{table}

\paragraph{Shared processes and additional task profiles.}
Knowledge's six positive members combine repository engineering (NL2Repo, SWE-bench Pro) with execution reliability and tool-mediated work (Claw Eval, APEX Agents, CyberGym, MCPAtlas); all remain positive across the nine conditions. This profile matches three system-building tasks among four deliverables. Games combines reasoning, search, and interaction across thirteen published positive members, all positive in the nine conditions; its three interactive-fiction tasks and five programmed agents share exploration while differing in implementation demands. Scientific \& ML has seventeen published positive members spanning mathematics, science, implementation, and execution. Its profile depends on increment size and window: DeepSWE reverses to $-0.77$ for final-interval increments above $0.5$ points. CUDA and Model Development retain their engineering--tool profiles and opposing mathematical or knowledge associations in Table~\ref{tab:composite_profiles_additional}; their three- and two-task panels also admit task-level interpretation. Puzzle \& Challenge's published negative members, SciCode, GPQA, and CritPt, all correlate positively with initial performance ($0.80$) and negatively with terminal frequency ($-0.95$).

\begin{table}[htbp]
\centering\small
\setlength{\tabcolsep}{3pt}
\caption{Composite profiles in the remaining six categories. All external measurements with $|\rho_s|\geq0.70$ for terminal improvement frequency are listed, with no cap on profile size. Parentheses give Spearman correlations; category counts are models/tasks. The default uses the last two intervals and positive gain; the additional increment threshold is stated explicitly. An asterisk identifies a benchmark with at least one frozen estimated input in that panel. Correlated members describe a joint performance profile, not separate effects.}
\label{tab:composite_profiles_additional}
\begin{tabular}{p{2.05cm}p{9.0cm}}
\toprule
\raggedright Category & \raggedright Concrete benchmark members and observed associations\tabularnewline
\midrule
\raggedright Formal (5/8) & \raggedright $\tau^3$-Banking (+0.70); AA-LCR (+0.80)\newline $>0.5$ points: CritPt, FrontierScience-Research, FrontierScience-Olympiad*, GPQA (+0.87); HMMT Feb 2026 (+0.95); AIME 2026, SciCode, $\tau^3$-Banking (+0.97)\tabularnewline[4pt]
\raggedright Games (5/8) & \raggedright HMMT Nov 2025 (+0.71); AA-LCR, AIME 2026, SciCode (+0.72); HMMT Feb 2026 (+0.76); BrowseComp, IMO-AnswerBench, MathArena, SimpleQA (+0.82); $\tau^3$-Banking (+0.87); CritPt, FrontierScience-Research, GPQA (+0.97)\tabularnewline[4pt]
\raggedright Scientific \& ML (4/4) & \raggedright APEX Agents, BrowseComp, CritPt, CyberGym, DeepSearchQA, FrontierScience-Research, GPQA, HLE, MMLU-Pro, SimpleQA, SWE-bench Multilingual, SWE-bench Verified, Terminal-Bench, Toolathlon (+0.80); HMMT Nov 2025 (+0.95); DeepSWE, MathArena (+1.00)\tabularnewline[4pt]
\raggedright CUDA (4/3) & \raggedright HMMT Feb 2026, MathArena (-0.95); BrowseComp*, FrontierScience-Olympiad, IFEval*, SimpleQA (-0.74); MCPAtlas, SWE-bench Multilingual, SWE-bench Pro (+0.74)\tabularnewline[4pt]
\raggedright Model Development (4/2) & \raggedright FrontierScience-Research, HMMT Feb 2026, SimpleQA (-1.00); BrowseComp*, MathArena (-0.80); SWE-bench Multilingual (+0.80)\newline $>0.5$ points: BrowseComp*, FrontierScience-Research, HMMT Feb 2026, SimpleQA (-0.95); MCPAtlas, SWE-bench Multilingual, SWE-bench Pro (+0.74)\tabularnewline[4pt]
\raggedright Puzzle \& Challenge (4/5) & \raggedright AIME 2026*, CritPt, FrontierScience-Olympiad*, GPQA, HMMT Feb 2026*, MathArena*, MMLU-Pro*, SciCode, SimpleQA* (-0.95)\tabularnewline[4pt]
\bottomrule
\end{tabular}
\end{table}

\begin{figure}[htbp]
\centering
\includegraphics[width=\linewidth]{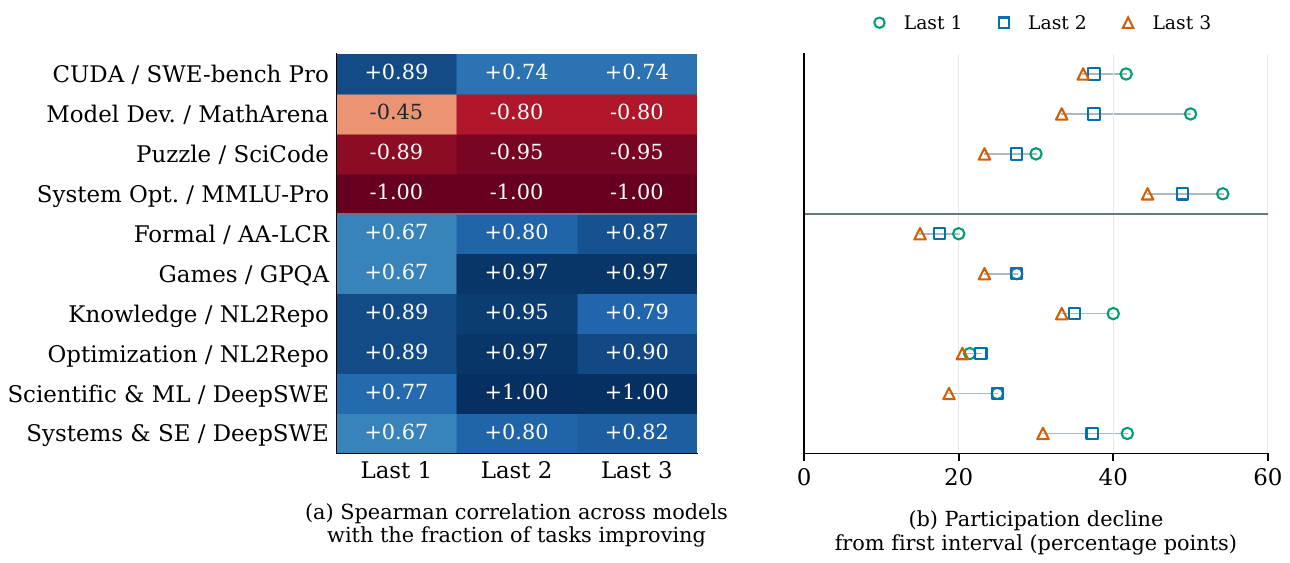}
\caption{Window sensitivity on all ten categories. (a) Benchmark correlations with the equally weighted mean fraction of tasks improving across the selected final intervals, using positive gain. The displayed pairs come from the task-interpretation analysis. (b) Task-averaged model participation decline relative to the first interval, in percentage points. The horizontal rule separates AutoLab from EdgeBench; model/task counts follow Table~\ref{tab:task_content}. Last two is the main window. All tasks and models are retained, and the response models are unchanged.}
\label{fig:capability_window_sensitivity}
\end{figure}

Across all three windows, category-mean participation and effective gain coverage remain below their first-interval values in every category. For the last one, two, and three intervals, macro participation declines are $35.2$, $31.7$, and $27.9$ percentage points; coverage declines are $23.6$, $20.5$, and $18.2$ points. Coverage is computed per task and interval before averaging, retaining all-zero cases as zero.

\subsection{Gain Realization over the Observed Trajectory}
\label{sec:gain_timing}

To describe gain timing without an early/late cutoff, normalize the observed window by $u_k=(\tau_k-\tau_0)/(1-\tau_0)$ and let $\overline\Delta_{mc,k}=\bar s_{mc}(\tau_{k+1})-\bar s_{mc}(\tau_k)$. The gain-weighted interval midpoint is
\begin{equation}
    \ell^{\mathrm{gain}}_{mc}
    =\frac{\sum_{k=0}^{N-1}\overline\Delta_{mc,k}(u_k+u_{k+1})/2}
           {\sum_{k=0}^{N-1}\overline\Delta_{mc,k}}.
    \label{eq:gain_weighted_time}
\end{equation}
Smaller values indicate earlier realization of the model's observed gains; constant-rate improvement gives $0.5$. The statistic locates gains at recorded interval midpoints rather than asserting within-interval event times. It is defined for $44$ of the $45$ model--category pairs. GPT-5.5 has zero total gain in Model Development, leaving three effective models for that category's timing correlations. Supplementary tables report effective counts and attainment correlations on the same effective model set.

\begin{table}[htbp]
\centering
\small
\setlength{\tabcolsep}{3pt}
\caption{Gain realization on all four models in AutoLab System Optimization (12 tasks). Scores use the 0--100 scale. The fourth column is the fraction of each model\textquotesingle s observed 20--100\% gain realized by 60\% budget. The timing index $\ell^{\mathrm{gain}}$ weights normalized interval midpoints by category mean-score increments; smaller values indicate earlier gains.}
\label{tab:system_gain_timing}
\begin{tabular}{lrrrr}
\toprule
Model & First score & Final score & Gain by 60\% & $\ell^{\mathrm{gain}}$\\
\midrule
Opus 4.7 & 63.96 & 68.55 & 94.8\% & 0.179\\
DeepSeek V4 Pro & 31.40 & 50.92 & 91.0\% & 0.255\\
Gemini 3.1 Pro & 51.13 & 60.45 & 100.0\% & 0.125\\
GLM 5.2 & 50.63 & 63.52 & 71.5\% & 0.313\\
\bottomrule
\end{tabular}
\end{table}

Table~\ref{tab:system_gain_timing} reports all four models in System Optimization. The same six published benchmarks associate negatively with both terminal improvement frequency and $\ell^{\mathrm{gain}}$: AA-LCR at $-0.74$, GPQA/SciCode/SimpleQA at $-0.80$, and MMLU-Pro/MathArena at $-1.00$. All six retain negative terminal-frequency associations in every window--threshold condition. Their first-score correlations differ: AA-LCR is $0.95$, GPQA/SciCode $0.80$, MMLU-Pro/MathArena $0.60$, and SimpleQA $0.00$. The shared profile thus concerns gain timing, without requiring identical initial-attainment relationships. The fraction of observed $20$--$100\%$ gain realized by $60\%$ budget ranges from $71.5\%$ to $100\%$. IMO-AnswerBench retains the opposite timing association ($1.00$), distinguishing this profile from a uniform mathematics effect. Puzzle \& Challenge provides a second example: SciCode correlates positively with initial performance ($0.80$) and negatively with $\ell^{\mathrm{gain}}$ ($-0.80$). Timing uses complete observed trajectories for retrospective interpretation; continuation conditions on the currently visible prefix.

\subsection{Joint Readouts and Observed Benchmark Profiles}

The joint predictor combines all $29$ inputs through the five PCs. Expanding its readouts gives $\boldsymbol\beta_c^A=\mathbf V_5\mathbf w_c$ and $\boldsymbol\beta_c^B=\mathbf V_5\mathbf v_c$ in equation~\ref{eq:benchmark_readout}; the numerical supplement supplies both full vectors. Direct PC evaluation and this expansion agree within $9\times10^{-16}$ in the fitted indices. The joint readouts predict growth, while empirical benchmark profiles characterize task demands and gain timing through associations with observed outcomes. These profiles are descriptive associations rather than independent benchmark effects; larger-increment profiles in Table~\ref{tab:composite_profiles_additional} concern a different outcome from the opportunity response's any-positive improvement.

\section{Continuation Fitting and the Sequential Value Bound}
\label{sec:continuation_fitting}

\subsection{The Monotone Decision Reference}

The decision reference uses the same external coordinates and log-time functional form as the category response, with an analysis-specific first checkpoint. It is refitted on source-model histories with a positive slope:
\begin{align}
    A^{\mathrm{ref}}_{mc}&=a^{\mathrm{ref}}_c+
            (\mathbf w^{\mathrm{ref}}_c)^{\mathsf T}\mathbf z_m,
    &B^{\mathrm{ref}}_{mc}&=\operatorname{softplus}\!\left(
            b^{\mathrm{ref}}_c+(\mathbf v^{\mathrm{ref}}_c)^{\mathsf T}\mathbf z_m\right),
    \label{eq:reference_readouts}\\
    r_{mc}(\tau)&=\sigma\!\left(A^{\mathrm{ref}}_{mc}+B^{\mathrm{ref}}_{mc}g(\tau)\right).
    \label{eq:reference_response}
\end{align}
Here $\operatorname{softplus}(u)=\log(1+e^u)$ guarantees $B^{\mathrm{ref}}_{mc}>0$. AutoLab uses $\tau_0=0.05$ for the first declared decision checkpoint and $\tau_0=0.2$ for category forecasting; EdgeBench uses $\tau_0=2/12$ for both. These decision grids and source-history fits are distinct from the early-checkpoint category forecasts in Table~\ref{tab:temporal_protocol}. The fitted category response $\widehat q$ and the decision reference $r$ share their external representation and functional structure, with coefficients learned under their respective information conditions.

\subsection{Source-Only Estimation}
\label{sec:source_estimation}

We fit a separate continuation model within each benchmark suite. Source category means average replicates within a model--task--checkpoint and then average available tasks within category. The monotone reference in equation~\ref{eq:reference_response} minimizes score squared error with equal total model weight within category and coefficient penalty $0.01$ on PC readouts; intercepts are unpenalized. Reference fits use completed source histories, including declared endpoints. Missing earlier observations are not filled with future scores.

Let $\widetilde{\mathbf h}_k\in\mathbb R^7$ contain standardized budget progress, current and initial scores, last-interval and last-two-interval gains, observed-prefix improvement rate, and trailing-stagnation fraction. The positive response coefficients are
\begin{equation}
    \log\alpha_k=u_{\alpha,0}+\mathbf u_\alpha^{\mathsf T}\widetilde{\mathbf h}_k+d_i^\alpha,
    \qquad
    \log\beta_k=u_{\beta,0}+\mathbf u_\beta^{\mathsf T}\widetilde{\mathbf h}_k+d_i^\beta.
    \label{eq:conditional_parameters}
\end{equation}
Within each suite, the weights are shared across source models and tasks. Regularized task effects $d_i^\alpha,d_i^\beta$ are learned from other models' histories on task $i$. We jointly fit the next-checkpoint ($\mathrm n$) and endpoint ($\mathrm e$) gains in completed source runs. For $H_k>0$, their normalized targets are
\begin{equation}
    Z_{k,\mathrm n}=\frac{Y_{k+1}-Y_k}{H_k},
    \qquad Z_{k,\mathrm e}=\frac{Y_N-Y_k}{H_k},
    \label{eq:gain_targets}
\end{equation}
Both targets are zero when no score space remains. Equal-weight squared errors on these two fractions train the same curve, including zero-gain histories, with parameter regularization and source-only model selection.

Index active training states by $j$ and let $\boldsymbol\phi_j$ concatenate an intercept, the seven source-standardized state features, and task indicators multiplied by $\sqrt{n_{\mathrm{task}}}$, where $n_{\mathrm{task}}$ is the number of source tasks. Let $\boldsymbol\theta_\alpha,\boldsymbol\theta_\beta$ be the corresponding coefficient vectors. This design implements equation~\ref{eq:conditional_parameters}; its task effects satisfy $d_i^\alpha=\sqrt{n_{\mathrm{task}}}\,\theta_{\alpha,i}$ and likewise for the shape. Write $F_{j,h}=1-\exp[-\alpha_jD_j(\tau_{j,h})^{\beta_j}]$. The objective is
\begin{equation}
    \mathcal L_{\mathrm{gain}}
      =\frac12\sum_j\nu_j\sum_{h\in\{\mathrm n,\mathrm e\}}(F_{j,h}-Z_{j,h})^2
       +\rho\left(\|\boldsymbol\theta_{\alpha,\mathrm{pen}}\|_2^2
                         +\|\boldsymbol\theta_\beta\|_2^2\right).
    \label{eq:conditional_objective}
\end{equation}
The subscript $\mathrm{pen}$ denotes the intensity coefficients excluding the intercept; all shape coefficients are penalized. The fixed-shape case sets $\boldsymbol\theta_\beta=\mathbf0$. State weights $\nu_j$ are inverse trajectory lengths, rescaled so that each source model has equal total mass and their mean over states is one. The observed-prefix improvement-rate feature divides the number of positive observed increments by the number of observed intervals; the stagnation feature uses the trailing number of zero-gain intervals with the same denominator. Both are zero when no interval has yet been observed. Active full-score states remain in training with zero normalized targets. Numerically, remaining score space is floored at $10^{-10}$ for division and rescaling, and the decision uses a positive-value tolerance of $10^{-10}$; full-score predictions are therefore at most this tolerance and stop. Naturally ended states return exactly zero gain.

For each outer target-model holdout, we consider $\rho\in\{1,10,100\}$ and learned or fixed shape. Each candidate is evaluated by leaving out source models in turn. Its source-only selection score averages disagreement between the signs of $\max_h\widehat Q_{k,h}$ and the realized value $\max_h[H_kZ_{k,h}-\lambda c_{k,h}]$, weighted by the absolute realized value and averaged across prices. This is a realized-label selection criterion; the conditional-mean error in the theorem below is a different quantity. Ties prefer stronger regularization and then fixed shape. To construct each source training state's reference, we further exclude that state's model from reference fitting. The target reference is refitted using all outer source models. Source predictions used for price calibration repeat the model selection with each calibration model excluded.

\subsection{Replay Metrics and Operating Points}
\label{sec:operating_protocol}

\paragraph{Replay population and visible prefixes.}
AutoLab retains runs with a verified natural or budget termination, complete recorded evaluation outputs, a valid elapsed duration, and explicit initial-score feedback. A task--model pair enters replay when all three replicate runs meet these conditions, yielding $492$ runs. Decisions start at the first declared checkpoint at or after initial feedback becomes visible; each prefix score uses only evaluations recorded by that checkpoint. Runs without an available continuation checkpoint contribute terminal accounting only. Naturally terminated runs retain their final score and incur no further cost. EdgeBench retains the $252$ published task--model mean curves with all six two-hour checkpoints. Missing earlier observations are never filled with later scores. Replay retains eligible pairs without requiring a common task intersection across all models.

AutoLab continuation uses checkpoints at $5\%$, $10\%$, $15\%$, $25\%$, $50\%$, $75\%$, and $100\%$ of declared budget. EdgeBench uses two-hour checkpoints from $2$ to $12$ hours. The price grid contains zero and $61$ logarithmically spaced positive values from $10^{-5}$ to $30$ on the $[0,1]$ score scale, plus a run-to-end option.

For model $m$ and replicate $r$, cost $x$ is the sum of charged task hours divided by the sum of full-run hours for the same tasks, including the initial observation. Loss is the task-mean score shortfall in points. We retain nondominated parameter-sweep points and linearly interpolate the frontier $\ell^p_{mr}(x)$ for policy $p$. Let $[a_{mr},b_{mr}]$ be the reachable cost interval common to the compared policies within that model--replicate pair. The reported normalized frontier area is
\begin{equation}
    \mathcal A^p=\frac{1}{|\mathcal M|}\sum_{m\in\mathcal M}
       \frac{1}{R_m}\sum_{r=1}^{R_m}
       \frac{1}{b_{mr}-a_{mr}}\int_{a_{mr}}^{b_{mr}}\ell^p_{mr}(x)\,dx,
    \label{eq:normalized_frontier_area}
\end{equation}
Here $\mathcal M$ is the suite's evaluated model set, with $R_m=3$ for AutoLab and $R_m=1$ for EdgeBench's published mean trajectories. Dividing by interval width gives an average shortfall in score points; averaging then proceeds over replicates and models. Figure~\ref{fig:continuation_frontiers}a--b displays mean curves on the intersection of these intervals across models and replicates.

Table~\ref{tab:matched_tradeoffs} reads matched operating points from these mean curves. At time savings $s\in\{0.30,0.50\}$, retained score is $100[1-\bar\ell^p(1-s)/\bar S_{\mathrm{full}}]$, where $\bar\ell^p$ averages the interpolated replicate curves within models and then models. At retained fractions $q\in\{0.95,0.90\}$, we find the smallest cost $x$ satisfying $\bar\ell^p(x)\le(1-q)\bar S_{\mathrm{full}}$ and report $100(1-x)$. All crossings lie within the shared cost range; no extrapolation is used. These parameter-sweep comparisons summarize the frontiers, whereas Table~\ref{tab:frozen_operating} evaluates source-calibrated prices frozen for each target.

\begin{table}[htbp]
\centering
\caption{Mean score loss over matched cost ranges (score points; lower is better). Ranges are shared across policies for each model and replicate; results average replicates then models.}
\label{tab:continuation_results}
\small
\begin{tabular}{lrr}
\toprule
Policy & AutoLab & EdgeBench\\
\midrule
Fixed budget & $14.384$ & $2.442$\\
Time-based patience & $19.372$ & $3.233$\\
Recent gain (1 interval) & $22.907$ & $1.946$\\
Recent gain (2 intervals) & $24.271$ & $1.982$\\
Conditioned continuation & $\mathbf{8.825}$ & $\mathbf{1.774}$\\
\bottomrule
\end{tabular}
\end{table}

\begin{table}[!htb]
\centering
\caption{Matched tradeoffs on the mean frontiers in Figure~\ref{fig:continuation_frontiers}. For each suite: full-run score retained at $30\%/50\%$ time savings, and time saved at $95\%/90\%$ score retention. All entries are percentages; higher is better, best at displayed precision in bold. Values use interpolation within shared cost ranges.}
\label{tab:matched_tradeoffs_full}
\small
\begin{tabular*}{\linewidth}{@{\extracolsep{\fill}}lrrrrrrrr@{}}
\toprule
 & \multicolumn{4}{c}{\textbf{AutoLab}} & \multicolumn{4}{c}{\textbf{EdgeBench}}\\
\cmidrule(lr){2-5}\cmidrule(lr){6-9}
 & \multicolumn{2}{c}{Score retained} & \multicolumn{2}{c}{Time saved} & \multicolumn{2}{c}{Score retained} & \multicolumn{2}{c}{Time saved}\\
Method & $30\%$ & $50\%$ & $95\%$ & $90\%$ & $30\%$ & $50\%$ & $95\%$ & $90\%$\\
\midrule
Fixed budget & $89.9$ & $77.1$ & $17.2$ & $29.7$ & $94.7$ & $89.5$ & $28.5$ & $48.2$\\
Time-based patience & $81.3$ & $65.9$ & $8.2$ & $18.0$ & $92.8$ & $84.9$ & $24.6$ & $37.2$\\
Recent gain (1 interval) & $75.1$ & $58.6$ & $6.0$ & $12.1$ & $96.2$ & $91.6$ & $38.1$ & $54.9$\\
Recent gain (2 intervals) & $72.6$ & $54.7$ & $5.5$ & $10.9$ & $96.2$ & $\mathbf{91.8}$ & $34.9$ & $54.6$\\
Conditioned continuation & $\mathbf{97.3}$ & $\mathbf{90.5}$ & $\mathbf{38.1}$ & $\mathbf{50.9}$ & $\mathbf{96.8}$ & $\mathbf{91.8}$ & $\mathbf{38.4}$ & $\mathbf{55.9}$\\
\bottomrule
\end{tabular*}
\end{table}

For operating points, candidate nonnegative prices and a run-to-end option are replayed on nested source predictions. For each reported source budget target ($25\%,37.5\%,50\%,62.5\%,75\%$), selection minimizes source score shortfall among candidates whose mean declared cost is within that preference; if none is feasible, it uses the least-cost candidate. Ties prefer higher cost and then the smaller price parameter. The selected option is frozen for the target. Across the seven AutoLab source panels, running every trajectory to its recorded end uses $52.06\%$--$62.45\%$ of declared budget on average. Run-to-end therefore satisfies both the $62.5\%$ and $75\%$ source constraints with zero score loss and is selected for every target model; the other reported settings select nonnegative prices. Fixed-budget stopping uses the same preference as a declared cutoff, stopping at the last decision checkpoint no later than that cutoff. A desired source budget fraction need not equal the realized target cost fraction. All costs entering the rule are normalized by the declared budget of the run; normalization by eventual full-run duration is used only for reporting realized costs. EdgeBench replay uses published task--model mean curves; AutoLab uses individual replicates.

Relative score loss is $100L/\bar S_{\mathrm{full}}$, where $L$ is the reported mean loss in points and $\bar S_{\mathrm{full}}$ is the mean full-run score under the same task, replicate, and model averaging. The full-run means are $63.78$ points on AutoLab and $35.68$ on EdgeBench. Time saved is $100(1-\bar x)$, where $\bar x$ is the reported mean actual-cost fraction.

\begin{table}[htbp]
\centering
\caption{Frozen operating points. Each target supplies the conditioned policy's source budget constraint and the fixed policy's declared cutoff. Cost is a fraction of full-run time. Loss is in points, with relative score loss in parentheses. Results average replicates within model, then models.}
\label{tab:frozen_operating}
\small
\begin{tabular}{llrrrr}
\toprule
Target & Policy & \multicolumn{2}{c}{AutoLab} & \multicolumn{2}{c}{EdgeBench}\\
 & & Cost & Loss (pt; \%) & Cost & Loss (pt; \%)\\
\midrule
$25\%$ & Fixed budget & $46.3\%$ & $15.77$ ($24.73\%$) & $16.7\%$ & $10.12$ ($28.36\%$)\\
 & Conditioned & $46.5\%$ & $6.99$ ($10.96\%$) & $23.3\%$ & $7.66$ ($21.46\%$)\\
$37.5\%$ & Fixed budget & $46.3\%$ & $15.77$ ($24.73\%$) & $33.3\%$ & $5.95$ ($16.68\%$)\\
 & Conditioned & $67.2\%$ & $1.52$ ($2.39\%$) & $35.5\%$ & $4.96$ ($13.90\%$)\\
$50\%$ & Fixed budget & $70.1\%$ & $5.34$ ($8.38\%$) & $50.0\%$ & $3.74$ ($10.49\%$)\\
 & Conditioned & $89.0\%$ & $0.12$ ($0.19\%$) & $44.6\%$ & $3.47$ ($9.71\%$)\\
$62.5\%$ & Fixed budget & $70.1\%$ & $5.34$ ($8.38\%$) & $50.0\%$ & $3.74$ ($10.49\%$)\\
 & Conditioned & $100.0\%$ & $0.00$ ($0.00\%$) & $60.7\%$ & $1.87$ ($5.25\%$)\\
$75\%$ & Fixed budget & $87.2\%$ & $2.10$ ($3.30\%$) & $66.7\%$ & $2.14$ ($6.01\%$)\\
 & Conditioned & $100.0\%$ & $0.00$ ($0.00\%$) & $69.1\%$ & $1.19$ ($3.34\%$)\\
\bottomrule
\end{tabular}
\end{table}

\subsection{Sequential Reassessment with Imperfect Gain Estimates}
\label{sec:decision_proof}

Consider a finite declared grid $\tau_0<\cdots<\tau_N=1$, a bounded best-so-far potential score process $Y_k$, and an increasing filtration $\mathcal F_k$ containing its visible prefix and fixed source information. Stopping truncates this potential trajectory; natural termination is absorbing. The price $\lambda\ge0$ is fixed before the target run. Purchased intervals have additive declared costs $c_k=\tau_{k+1}-\tau_k$. A policy's continue indicator $a_k\in\{0,1\}$ is measurable with respect to $\mathcal F_k$, with forced stopping at natural termination and $N$.

For $k<N$, define the true conditional gain means and fixed-plan values as
\begin{align}
    g_{k,\mathrm n}&=\mathbb E[Y_{k+1}-Y_k\mid\mathcal F_k],
    &Q_{k,\mathrm n}&=g_{k,\mathrm n}-\lambda c_k,\\
    g_{k,\mathrm e}&=\mathbb E[Y_N-Y_k\mid\mathcal F_k],
    &Q_{k,\mathrm e}&=g_{k,\mathrm e}-\lambda(1-\tau_k),\\
    Q_k^{\max}&=\max(Q_{k,\mathrm n},Q_{k,\mathrm e}),
    &J_k&=\max(0,Q_k^{\max}).
\end{align}
At the endpoint set both $Q$ values to zero. At earlier absorbing states, the score gains are zero and the virtual endpoint value remains $-\lambda(1-\tau_k)$; $J_k=0$. These virtual costs retain the fixed-plan definition and are not charged to the stopped policy.

\begin{proposition}[Fixed-plan bound with sign loss]
\label{prop:sequential_value}
Let $V_k^\pi$ be the conditional expected score gain from checkpoint $k$ until policy stopping, minus $\lambda$ times the sum of purchased declared interval costs. At a visited checkpoint, define
\begin{equation}
    L_k=J_k-a_kQ_k^{\max}\ge0.
\end{equation}
The nonnegative sign loss is zero for a correct continue/stop decision. If $\mathcal V_k^\pi$ is the set of visited checkpoints starting at $k$, including the checkpoint where stopping is selected, then
\begin{equation}
    V_k^\pi\ge J_k-
       \mathbb E_\pi\!\left[\sum_{j\in\mathcal V_k^\pi}L_j\,\middle|\,\mathcal F_k\right].
    \label{eq:inexact_value_bound}
\end{equation}
In particular, if $a_j=\mathbf1\{Q_j^{\max}>0\}$ at every visited checkpoint $j$, then $V_k^\pi\ge J_k$.
\end{proposition}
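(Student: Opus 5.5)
The plan is a backward induction on $k$ from $N$ down. It rests on a one-step recursion for $V_k^\pi$ and on the observation that $J_{k+1}$ dominates both the ``stop'' value $0$ and the ``run to the endpoint'' value $Q_{k+1,\mathrm e}$.

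\textbf{Preliminaries.} First I check $L_k\ge0$. If $a_k=0$, then $L_k=J_k\ge0$. If $a_k=1$, then $L_k=\max(0,Q_k^{\max})-Q_k^{\max}\ge0$. With $a_k=\mathbf 1\{Q_k^{\max}>0\}$ both cases give $L_k=0$, so the ``in particular'' claim follows from \eqref{eq:inexact_value_bound}.

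Next I write the recursion
\[
V_k^\pi=a_k\bigl(g_{k,\mathrm n}-\lambda c_k+\mathbb E[V_{k+1}^\pi\mid\mathcal F_k]\bigr).
\]
This holds because $a_k$ is $\mathcal F_k$-measurable, costs are additive, and stopping truncates the same potential trajectory. At $k=N$ and at absorbing states, $a_k=0$ and $J_k=0$, so the bound holds with equality. I would verify the absorbing case carefully: there the gains are zero, the endpoint $Q$ equals $-\lambda(1-\tau_k)\le0$, and so $J_k=0$.

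\textbf{Inductive step.} Assume $V_{k+1}^\pi\ge J_{k+1}-\mathbb E_\pi[\sum_{j\in\mathcal V_{k+1}^\pi}L_j\mid\mathcal F_{k+1}]$ on the event that $k+1$ is visited.

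If $a_k=0$, then $V_k^\pi=0=J_k-L_k$, and $\mathcal V_k^\pi=\{k\}$, so the claim holds.

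If $a_k=1$, I substitute the hypothesis and use the tower property to merge the loss terms into $\mathbb E_\pi[\sum_{j\in\mathcal V_k^\pi}L_j\mid\mathcal F_k]-L_k$. It then remains to show
\[
g_{k,\mathrm n}-\lambda c_k+\mathbb E[J_{k+1}\mid\mathcal F_k]\ge Q_k^{\max}.
\]
I get this from two lower bounds on $J_{k+1}$.
\begin{itemize}
\item Using $J_{k+1}\ge0$, the left side is at least $Q_{k,\mathrm n}$.
\item Using $J_{k+1}\ge Q_{k+1,\mathrm e}=\mathbb E[Y_N-Y_{k+1}\mid\mathcal F_{k+1}]-\lambda(1-\tau_{k+1})$ and the tower property, the left side is at least
\[
\mathbb E[Y_{k+1}-Y_k\mid\mathcal F_k]+\mathbb E[Y_N-Y_{k+1}\mid\mathcal F_k]-\lambda\bigl(c_k+1-\tau_{k+1}\bigr)=Q_{k,\mathrm e}.
\]
\end{itemize}
Together these give the left side $\ge Q_k^{\max}=J_k-L_k$, which closes the induction.

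\textbf{Main obstacle.} The hard part is the telescoping in the second bound, which must survive absorbing states and the endpoint convention. At $k+1=N$, I need $Q_{N,\mathrm e}=0$ to be consistent with $Y_N-Y_N=0$ and zero remaining cost; it is. At an absorbing $k+1$, the potential trajectory is frozen, so $\mathbb E[Y_N-Y_{k+1}\mid\mathcal F_{k+1}]=0$, and the virtual cost $-\lambda(1-\tau_{k+1})$ keeps $Q_{k+1,\mathrm e}$ exactly in the form used above, even though that cost is never charged to the policy. This is precisely why the virtual endpoint costs are defined as they are.

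I would also confirm integrability, which follows from the bounded scores and the finite grid. Finally, I would confirm that conditioning the loss sum on $\mathcal F_{k+1}$ and then on $\mathcal F_k$ is legitimate, because the visited set is determined by $\mathcal F$-measurable actions.
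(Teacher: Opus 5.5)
Your proof is correct and follows the paper's argument essentially step for step. It uses backward induction with equality when $a_k=0$; when $a_k=1$ it combines the two bounds $J_{k+1}\ge0$ and $J_{k+1}\ge Q_{k+1,\mathrm e}$ with the tower identity $Q_{k,\mathrm e}=Q_{k,\mathrm n}+\mathbb E[Q_{k+1,\mathrm e}\mid\mathcal F_k]$ and the one-step split of the expected loss sum, and your explicit check that this identity holds under the endpoint and absorbing-state conventions (through the virtual cost $-\lambda(1-\tau_{k+1})$) spells out a point the paper leaves implicit.
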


\begin{proof}
Let $\mathcal D_k^\pi$ denote the conditional expected loss sum in equation~\ref{eq:inexact_value_bound}. At the endpoint and natural termination, $V_k^\pi=J_k=\mathcal D_k^\pi=0$. For an active checkpoint, additivity and the tower property yield
\begin{equation}
    Q_{k,\mathrm e}
      =Q_{k,\mathrm n}+\mathbb E[Q_{k+1,\mathrm e}\mid\mathcal F_k].
    \label{eq:conditional_tower}
\end{equation}
Proceed backward in $k$. If $a_k=0$, the policy stops, $V_k^\pi=0$, $L_k=J_k$, and there are no subsequent visits, giving equality. If $a_k=1$, the induction hypothesis implies
\begin{align}
    V_k^\pi
    &=Q_{k,\mathrm n}+\mathbb E[V_{k+1}^\pi\mid\mathcal F_k]\nonumber\\
    &\ge Q_{k,\mathrm n}+\mathbb E[J_{k+1}-\mathcal D_{k+1}^\pi\mid\mathcal F_k]\nonumber\\
    &\ge \max\!\left(Q_{k,\mathrm n},
               Q_{k,\mathrm n}+\mathbb E[Q_{k+1,\mathrm e}\mid\mathcal F_k]\right)
              -\mathbb E[\mathcal D_{k+1}^\pi\mid\mathcal F_k]\nonumber\\
    &=Q_k^{\max}-\mathbb E[\mathcal D_{k+1}^\pi\mid\mathcal F_k]
     =J_k-\mathcal D_k^\pi.
\end{align}
The second inequality uses both $J_{k+1}\ge0$ and $J_{k+1}\ge Q_{k+1,\mathrm e}$. The final equality uses $L_k=J_k-Q_k^{\max}$ when continuing and $\mathcal D_k^\pi=L_k+\mathbb E[\mathcal D_{k+1}^\pi\mid\mathcal F_k]$. This completes the induction.
\end{proof}

\paragraph{Effect of prediction error.}
Define $\epsilon_k=\max_h|\widehat g_{k,h}-g_{k,h}|$. Since the costs are identical in the true and estimated values, $|\max_h\widehat Q_{k,h}-Q_k^{\max}|\le\epsilon_k$. The estimated-value rule consequently satisfies
\begin{equation}
    L_k\le\epsilon_k\,\mathbf1\{a_k\ne\mathbf1\{Q_k^{\max}>0\}\}.
\end{equation}
Error contributes only when it changes the sign decision, and only along visited checkpoints. A positive numerical decision tolerance $\delta$ replaces $\epsilon_k$ in this inequality by $\epsilon_k+\delta$. Here $\epsilon_k$ includes finite-data fitting error, reference mismatch, and information lost by compressing the full prefix into seven features. The tower identity applies to the true conditional means; the separately updated fitted curves need not satisfy it exactly. The theorem compares against the three specified fixed plans and makes no assumption that the empirical regression attains the conditional means. If each purchased interval's realized cost is at most its declared interval cost, the same lower bound also holds for net value computed with realized costs.

\section{External Capability Representation}
\label{sec:representation_figure}

Figure~\ref{fig:capability_basis} displays the benchmark loadings and model coordinates used throughout the analysis. Table~\ref{tab:external_sources} identifies the corresponding external measurements and their sources.

\begin{figure}[tbp]
\centering
\includegraphics[width=\linewidth]{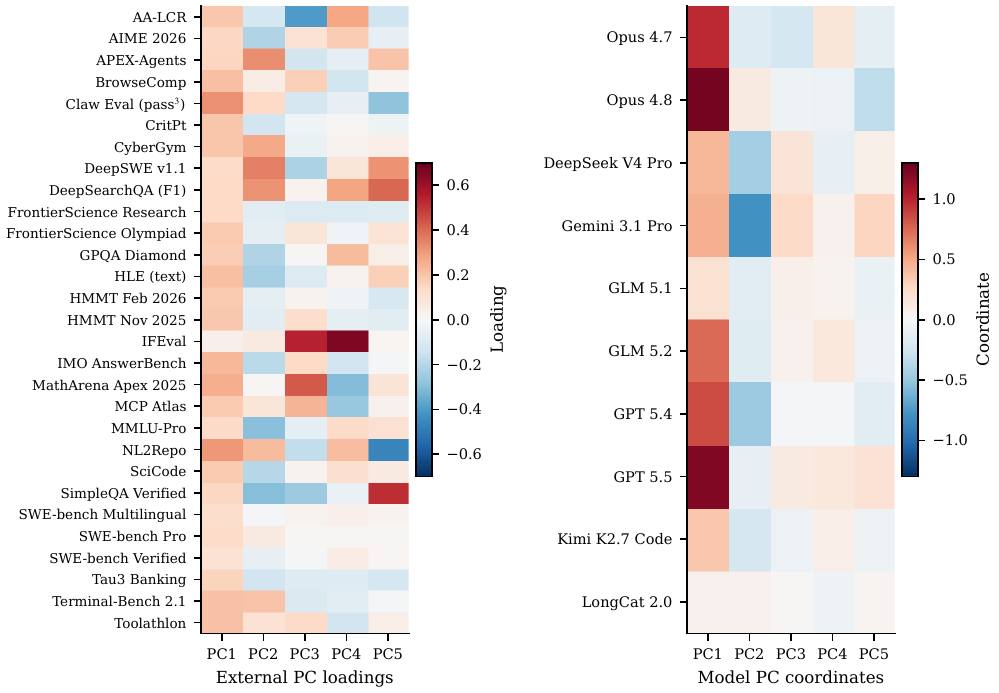}
\caption{External representation used by the capability--time model. Left: all $29$ benchmark loadings on five PCs estimated from $82$ reference model entries, excluding evaluated versions and configuration aliases. Right: the fixed coordinates of all ten evaluated model versions. Loadings and coordinates have separate color scales. The loadings summarize covariance patterns among external measurements; missing-entry estimation is specified in Appendix~\ref{sec:method_details}.}
\label{fig:capability_basis}
\end{figure}

\begin{table}[!htbp]
\centering
\caption{Sources of the 29 external measurements. Rows identify the versions or metrics used; links point to benchmark papers, dataset releases, or evaluation documentation.}
\label{tab:external_sources}
\begin{tabular}{@{}p{.43\linewidth}p{.53\linewidth}@{}}
\toprule
Measurement & Reference or source\\
\midrule
AA-LCR & \href{https://artificialanalysis.ai/methodology/intelligence-benchmarking}{Artificial Analysis methodology}\\
AIME 2026 & \href{https://huggingface.co/datasets/MathArena/aime_2026}{MathArena dataset}\\
APEX-Agents & \citet{vidgen2026apex}\\
BrowseComp & \href{https://github.com/openai/simple-evals/blob/652c89d0ca9df547706735883097e9537d40dc47/browsecomp_eval.py}{OpenAI evaluation release}\\
Claw Eval (pass$^3$) & \href{https://github.com/claw-eval/claw-eval/tree/5680b8b11ff2ee5dd2b07b89086a29a5c5c984d7}{Claw-Eval release}\\
CritPt & \href{https://huggingface.co/datasets/CritPt-Benchmark/CritPt}{CritPt dataset}\\
CyberGym & \href{https://www.cybergym.io/}{CyberGym project}\\
DeepSWE v1.1 & \href{https://github.com/datacurve-ai/deep-swe}{DataCurve benchmark release}\\
DeepSearchQA (F1) & \href{https://huggingface.co/datasets/google/deepsearchqa/tree/b2623f8653065c2672de6d941fc5434cd652376c}{Google dataset}\\
FrontierScience Research (FS Research) & \citet{wang2026frontierscience}\\
FrontierScience Olympiad & \citet{wang2026frontierscience}\\
GPQA Diamond & \citet{rein2024gpqa}\\
Humanity's Last Exam (text) & \href{https://labs.scale.com/leaderboard/humanitys_last_exam_text_only}{Text-only evaluation protocol}\\
HMMT February 2026 & \href{https://huggingface.co/datasets/MathArena/hmmt_feb_2026}{MathArena dataset}\\
HMMT November 2025 & \href{https://huggingface.co/datasets/MathArena/hmmt_nov_2025}{MathArena dataset}\\
IFEval & \href{https://arxiv.org/abs/2311.07911}{Instruction-Following Evaluation}\\
IMO-AnswerBench & \href{https://imobench.github.io/}{IMO-Bench project}\\
MathArena Apex 2025 & \href{https://matharena.ai/apex/}{MathArena Apex release}\\
MCPAtlas Public & \href{https://huggingface.co/datasets/ScaleAI/MCP-Atlas}{MCP-Atlas public dataset}\\
MMLU-Pro & \citet{wang2024mmlupro}\\
NL2Repo-Bench & \citet{ding2025nl2repo}\\
SciCode & \citet{tian2024scicode}\\
SimpleQA-Verified & \href{https://epoch.ai/benchmarks/simple-qa-verified}{Verified benchmark documentation}\\
SWE-bench Multilingual & \href{https://huggingface.co/datasets/SWE-bench/SWE-bench_Multilingual}{Multilingual dataset}\\
SWE-bench Pro & \citet{deng2025swepro}\\
SWE-bench Verified & \href{https://huggingface.co/datasets/SWE-bench/SWE-bench_Verified/tree/78f471bf655a3137b2e8a75af1501690ec009ec3}{Verified dataset}\\
$\tau^3$-Banking & \href{https://taubench.com/leaderboard/}{Sierra banking evaluation}\\
Terminal-Bench 2.1 & \href{https://www.tbench.ai/news/terminal-bench-2-1}{Terminal-Bench 2.1 release}\\
Toolathlon (original) & \href{https://toolathlon.xyz/}{Toolathlon project}\\
\bottomrule
\end{tabular}
\end{table}

\FloatBarrier
\section*{AI Use Statement}

AI tools provided iterative assistance with research and mathematical development, including hypotheses and proofs; experimental implementation and data analysis; and language polishing and consistency checks. Numerical results were checked against recorded experiment outputs. The authors take responsibility for the work.

\end{document}